\icmltitlerunning{Geometry of Program Synthesis}
\newcommand{\detectA}{\texttt{detectA}}
\newcommand{\parityCheck}{\texttt{parityCheck}}
\newcommand{\shiftMachine}{\texttt{shiftMachine}}
\newcommand{\reject}{\operatorname{reject}}
\newcommand{\accept}{\operatorname{accept}}
\newtheorem{theorem}{Theorem}[section]
\newtheorem{lemma}[theorem]{Lemma}
\newtheoremstyle{example}{\topsep}{\topsep}
	{}
	{}
	{\bfseries}
	{.}
	{2pt}
	{\thmname{#1}\thmnumber{ #2}\thmnote{ #3}}
	\theoremstyle{example}
	\newtheorem{definition}[theorem]{Definition}
	\newtheorem{example}[theorem]{Example}
	\newtheorem{remark}[theorem]{Remark}
\begin{document}

\twocolumn[
\icmltitle{Geometry of Program Synthesis}

\icmlsetsymbol{equal}{*}

\begin{icmlauthorlist}
\icmlauthor{James Clift}{equal,goo1}
\icmlauthor{Daniel Murfet}{equal,melb}
\icmlauthor{James Wallbridge}{equal,goo2}
\end{icmlauthorlist}

\icmlaffiliation{goo1}{jamesedwardclift@gmail.com}
\icmlaffiliation{melb}{Department of Mathematics, University of Melbourne. d.murfet@unimelb.edu.au}
\icmlaffiliation{goo2}{james.wallbridge@gmail.com}

\vskip 0.3in
]

\printAffiliationsAndNotice{\icmlEqualContribution}

\begin{abstract}
We re-evaluate universal computation based on the synthesis of Turing machines. This leads to a view of programs as singularities of analytic varieties or, equivalently, as phases of the Bayesian posterior of a synthesis problem. This new point of view reveals unexplored directions of research in program synthesis, of which neural networks are a subset, for example in relation to phase transitions, complexity and generalisation. We also lay the empirical foundations for these new directions by reporting on our implementation in code of some simple experiments.
\end{abstract}

\section{Introduction}
\label{introduction}

The challenge to efficiently learn arbitrary computable functions, that is, synthesise universal computation, seems a prerequisite to build truly intelligent systems.  There is strong evidence that neural networks are necessary but not sufficient for this task \citep{zador_critique, akhlaghpour_theory}.  Indeed, those functions efficiently learnt by a neural network are those that are continuous on a compact domain and that process memory in a rather primitive manner.  
This is a rather small subset of the complete set of computable functions, for example, those computable by a Universal Turing Machine.  This suggests an opportunity to revisit the basic conceptual framework of modern machine learning by synthesising Turing machines themselves and re-evaluating our concept of programs.  We will show that this leads to new unexplored directions for program synthesis.

In the classical model of computation a \emph{program} is a code for a Universal Turing Machine (UTM). As these codes are discrete, there is no nontrivial ``space'' of programs. However this may be a limitation in the classical model, and not a fundamental feature of computation. It is worth bearing in mind that, in reality, there is no such thing as a perfect logical state: in any implementation of a UTM in matter a code $w \in \{0,1\}^*$ is realised by an equilibrium state of some physical system (say a magnetic memory). When we say the code on the description tape of the physical UTM ``is'' $w$ what we actually mean is, adopting the thermodynamic language, that the system is in a phase (a local minima of the free energy) including the microstate $c$ we associate to $w$. However, when the system is in this phase its microstate is not \emph{equal} to $c$ but rather undergoes rapid spontaneous transitions between many microstates ``near'' $c$.

So in any possible physical realisation of a UTM, a \emph{program} is realised by a \emph{phase} of the physical system. Does this have any computational significance?

One might object that this is physics and not computer science. But this boundary is not so easily drawn: it is unclear, given a particular physical realisation of a UTM, which of the physical characteristics are incidental and which are essentially related to \emph{computation}, as such. In his original paper Turing highlighted locality in space and time, clearly inspired by the locality of humans operating with eyes, pens and paper, as essential features of his model of computation. Other physical characteristics are left out and hence implicitly regarded as incidental. 

In recent years there has been a renewed effort to synthesise programs by gradient descent \citep{neelakantan_neural, kaiser_neural, bunel_adaptive, gaunt_terpret, evans_learning, chen_execution}.  However, there appear to be serious obstacles to this effort stemming from the singular nature of the loss landscape.  When taken seriously at a mathematical level these efforts necessarily come up against this question of programs versus phases.  When placed within the correct mathematical setting of \emph{singular learning theory} \citep{watanabe_algebraic}, a clear picture arises of programs as singularities which complements the physical interpretation of phases.  The loss function is given as the KL-divergence, an analytic function on the manifold of learnable weights, which corresponds to the microscopic Hamiltonian of the physical system.  The Bayesian posterior corresponds to the Boltzmann distribution and we see emerging a dictionary between singular learning theory and statistical mechanics.

The purpose of this article is to (a) explain a formal mathematical setting in which it can be explained why this is the case (b) show how to explore this setting experimentally in code and (c) sketch the new avenues of research in program synthesis that this point of view opens up.  The conclusion is that computation should be understood within the framework of singular learning theory with its associated statistical mechanical interpretation which provides a new range of tools to study the learning process, including generalisation and complexity.

\subsection{Contributions}

Any approach to program synthesis by gradient descent can be formulated mathematically using some kind of manifold $W$ containing a discrete subset $W^{code}$ of codes, and equipped with a smooth function $K: W \longrightarrow \mathbb{R}$ whose set of zeros $W_0$ are such that $W_0 \cap W^{code}$ are precisely the solutions of the synthesis problem. The synthesis process consists of gradient descent with respect to $\nabla K$ to find a point of $W_0$, followed perhaps by some secondary process to find a point of $W_0 \cap W^{code}$. We refer to points of $W_0$ as \emph{solutions} and points of $W_0 \cap W^{code}$ as \emph{classical solutions}.

We construct a prototypical example of such a setup, where $W^{code}$ is a set of codes for a specific UTM with a fixed length, $W$ is a space of sequences of probability distributions over code symbols, and $K$ is the Kullback-Leibler divergence. The construction of this prototypical example is interesting in its own right, since it puts program synthesis formally in the setting of singular learning theory.  

In more detail:
\begin{itemize}
\item We define a smooth relaxation of a UTM (Appendix~\ref{appendix:stagedpseudoutm}) based on \citep{clift_derivatives} which is well-suited to experiments for confirming theoretical statements in singular learning theory. Propagating uncertainty about the code through this UTM defines a triple $(p, q, \varphi)$ in the sense of singular learning theory associated to a synthesis problem. This formally embeds program synthesis within singular learning theory.
\item We realise this embedding in code by providing an implementation in PyTorch of this propagation of uncertainty through a UTM. Using the No-U-Turn variant of MCMC \citep{hoffman_no} we can approximate the Bayesian posterior of any program synthesis problem (computational constraints permitting).
\item We explain how the real log canonical threshold (RLCT), the key geometric invariant from singular learning theory, is related to Kolmogorov complexity (Section~\ref{section:complexity}) and provide a novel thermodynamic interpretation (Section~\ref{section:thermo}).
\item We show that a MCMC-based approach to program synthesis will find, with high probability, a solution that is of low complexity (if it finds a solution at all) and sketch a novel point of view on the problem of ``bad local minima'' \citep{gaunt_terpret} based on these ideas (Appendix~\ref{appendix:implications}).
\item We give a simple example (Appendix~\ref{appendix:shiftmachine}) in which $W_0$ contains the set of classical solutions as a proper subset and every point of $W_0$ is a degenerate critical point of $K$, demonstrating the singular nature of the synthesis problem.
\item For two simple synthesis problems \detectA\, and \parityCheck\, we demonstrate all of the above, using MCMC to approximate the Bayesian posterior and theorems from \cite{watanabe_widely} to estimate the RLCT (Section \ref{section:experiments}). We discuss how $W_0$ is an extended object and how the RLCT relates to the local dimension of $W_0$ near a classical solution.
\end{itemize}

\section{Preliminaries}\label{section:prelim}

We use Turing machines, but \emph{mutatis mutandis} everything applies to other programming languages. Let $T$ be a Turing machine with tape alphabet $\Sigma$ and set of states $Q$ and assume that on any input $x \in \Sigma^*$ the machine eventually halts with output $T(x) \in \Sigma^*$. Then to the machine $T$ we may associate the set $\{ (x, T(x)) \}_{x \in \Sigma^*} \subseteq \Sigma^* \times \Sigma^*$. Program synthesis is the study of the inverse problem: given a subset of $\Sigma^* \times \Sigma^*$ we would like to determine (if possible) a Turing machine which computes the given outputs on the given inputs. 

If we presume given a probability distribution $q(x)$ on $\Sigma^*$ then we can formulate this as a problem of statistical inference: given a probability distribution $q(x,y)$ on $\Sigma^* \times \Sigma^*$ determine the most likely machine producing the observed distribution $q(x,y) = q(y|x) q(x)$. If we fix a universal Turing machine $\mathcal U$ then Turing machines can be parametrised by codes $w \in W^{code}$ with $\mathcal U(x,w) = T(x)$ for all $x \in \Sigma^*$. We let $p(y|x,w)$ denote the probability of $\mathcal U(x,w) = y$ (which is either zero or one) so that solutions to the synthesis problem are in bijection with the zeros of the Kullback-Leibler divergence $K(w)$ between the true distribution and the model.  So far this is just a trivial rephrasing of the combinatorial optimisation problem of finding a Turing machine $T$ with $T(x) = y$ for all $(x,y)$ with $q(x,y) > 0$. 

\textbf{Smooth relaxation.} One approach is to seek a smooth relaxation of the synthesis problem consisting of an analytic manifold $W \supseteq W^{code}$ and an extension of $K$ to an analytic function $K: W \rightarrow \mathbb{R}$ so that we can search for the zeros of $K$ using gradient descent. Perhaps the most natural way to construct such a smooth relaxation is to take $W$ to be a space of probability distributions over $W^{code}$ and prescribe a model $p(y|x,w)$ for propagating uncertainty about codes to uncertainty about outputs \citep{gaunt_terpret, evans_learning}. Supposing that such a smooth relaxation has been chosen together with a prior $\varphi(w)$ over $W$, smooth program synthesis becomes the study of the statistical learning theory of the triple $(p, q, \varphi)$.

There are at least two reasons to consider the smooth relaxation. Firstly, one might hope that stochastic gradient descent or techniques like Markov chain Monte Carlo will be effective means of solving the original combinatorial optimisation problem. This is not a new idea \citep[\S 6]{gulwani_program} but so far its effectiveness for large programs has not been proven. Independently, one might hope to find powerful new mathematical ideas that apply to the relaxed problem and shed light on the nature of program synthesis. 

\textbf{Singular learning theory.} All known approaches to program synthesis can be formulated in terms of a singular learning problem.  Singular learning theory is the extension of statistical learning theory to account for the fact that the set of true parameters has in general the structure of an analytic space as opposed to an analytic manifold \citep{watanabe_almost, watanabe_algebraic}.  It is organised around triples $(p, q, \varphi)$ consisting of a class of models $\{p(y|x,w):w\in W\}$, a true distribution $q(y|x)$ and a prior $\varphi$ on $W$. 

Given a triple arising as above from a smooth relaxation (see Section \ref{section:tms} for the key example) the Kullback-Leibler divergence between the true distribution and the model is
\begin{equation*}\label{eq:K_intro}
K(w) := D_{KL}(q\Vert p) = \int\!\!\!\int q(y|x) q(x) \log\frac{q(y|x)}{p(y|x,w)} dx dy\,
\end{equation*}
We call points of $W_0 = \{ w \in W \,|\, K(w) = 0 \}$ \emph{solutions} of the synthesis problem and note that
\begin{equation}
W_0 \cap W^{code} \subseteq W_0 \subseteq W
\end{equation}
where $W_0 \cap W^{code}$ is the discrete set of solutions to the original synthesis problem. We refer to these as the \emph{classical solutions}. As the vanishing locus of an analytic function, $W_0$ is an \emph{analytic space} over $\mathbb{R}$ \citep[\S 0.1]{hironaka}, \citep{griffith1978principles} and except in trivial cases it is not an analytic manifold.

The set of solutions fails to be a manifold due to singularities. We say $w \in W$ is a \emph{critical point} of $K$ if $\nabla K(w) = 0$ and a \emph{singularity} of the function $K$ if it is a critical point where $K(w) = 0$.  Since $K$ is a Kullback-Leibler divergence it is non-negative and so it not only vanishes on $W_0$ but $\nabla K$ also vanishes, hence every point of $W_0$ is a singular point.  Thus programs to be synthesised are singularities of analytic functions.

The geometry of $W_0$ depends on the particular model $p(y|x,w)$ that has been chosen, but some aspects are universal: the nature of program synthesis means that typically $W_0$ is an extended object (i.e. it contains points other than the classical solutions) and the Hessian matrix of second order partial derivatives of $K$ at a classical solution is not invertible - that is, the classical solutions are \emph{degenerate} critical points of $K$. This means that singularity theory is the appropriate branch of mathematics for studying the geometry of $W_0$ near a classical solution. It also means that the Fisher information matrix is degenerate at a classical solution, so that the appropriate branch of statistical learning theory is singular learning theory \citep{watanabe_almost, watanabe_algebraic}. For an introduction to singular learning theory in the context of deep learning see \citep{murfet2020deep}.

This geometry has important consequences for all synthesis tasks.  In particular, given a dataset $D_n = \{(x_i,y_i)\}_{i =1 }^n$ of input-output pairs from $q(x,y)$ and a predictive distribution $p^*(y|x, D_n) = \int p(y|x,w) p(w|D_n) dw$, the asymptotic form of the Bayes generalization error $B_g(n) := D_{KL}(q\Vert p^*)$ is given in expectation by $\mathbb{E}[B^*_g] = \lambda$ where $\lambda$ is a rational number called the real log canonical threshold (RLCT) \citep{watanabe_algebraic}.  Therefore, the RLCT is the key geometric invariant which should be estimated to determine the generalization capacity of program synthesis.  We also show how this number measures the model complexity of program synthesis by relating it to the Kolmogorov complexity.  One may interpret the RLCT as a refined measure of complexity.

\textbf{The synthesis process.} Synthesis is a \emph{problem} because we do not assume that the true distribution is known: for example, if $q(y|x)$ is deterministic and the associated function is $f: \Sigma^* \rightarrow Q$, we assume that some example pairs $(x, f(x))$ are known but no general algorithm for computing $f$ is known (if it were, synthesis would have already been performed). In practice synthesis starts with a sample $D_n = \{(x_i,y_i)\}_{i=1}^n$ from $q(x,y)$ with associated empirical Kullback-Leibler distance
\begin{equation}\label{eq:kl_empirical}
K_n(w) = \frac{1}{n}\sum_{i=1}^n\log\frac{q(y_i|x_i)}{p(y_i|x_i,w)}\,.
\end{equation}
If the synthesis problem is deterministic and $u \in W^{code}$ then $K_n(u) = 0$ if and only if $u$ explains the data in the sense that $\operatorname{step}^t(x_i, u) = y_i$ for $1 \le i \le n$. We now review two natural ways of finding such solutions in the context of machine learning.

\textbf{Synthesis by stochastic gradient descent (SGD).} The first approach is to view the process of program synthesis as stochastic gradient descent for the function $K: W \rightarrow \mathbb{R}$. We view $D_n$ as a large training set and further sample subsets $D_m$ with $m \ll n$ and compute $\nabla K_m$ to take gradient descent steps $w_{i+1} = w_i - \eta \nabla K_m(w_i)$ for some learning rate $\eta$. Stochastic gradient descent has the advantage (in principle) of scaling to high-dimensional parameter spaces $W$, but in practice it is challenging to use gradient descent to find points of $W_0$ \citep{gaunt_terpret}.

\textbf{Synthesis by sampling.} The second approach is to consider the Bayesian posterior associated to the synthesis problem, which can be viewed as an update on the prior distribution $\varphi$ after seeing $D_n$
\begin{align}
p(w|D_n) &= \frac{p(D_n|w)p(w)}{p(D_n)} = \frac{1}{Z_n}\varphi(w)\prod_{i=1}^np(y_i|x_i,w) \label{eq:posterior} \nonumber \\
                &= \frac{1}{Z^0_n}\exp\{-nK_n(w) + \log\varphi(w)\} 
\end{align}
where $Z^0_n =\int \varphi(w) \exp(-nK_n(w))dw$. If $n$ is large the posterior distribution concentrates around solutions $w \in W_0$ and so sampling from the posterior will tend to produce machines that are (nearly) solutions. The gold standard sampling is Markov Chain Monte Carlo (MCMC). 
Scaling MCMC to where $W$ is high-dimensional is a challenging task with many attempts to bridge the gap with SGD \citep{welling_bayesian,chen_stochastic,ding_bayesian,zhang_cyclical}.  Nonetheless in simple cases we demonstrate experimentally in Section \ref{section:experiments} that machines may be synthesised by using MCMC to sample from the posterior.

\section{Program Synthesis as Singular Learning}
\label{section:tms}

To demonstrate how program synthesis may be formalised in the context of singular learning theory we construct a very general synthesis model that can in principle synthesise arbitrary computable functions from input-output examples.  This is done by propagating uncertaintly through the weights of a smooth relaxation of a universal Turing machine (UTM) \citep{clift_derivatives}.  Related approaches to program synthesis paying particular attention to model complexity have been given in \citep{schmidhuber_discovering, hutter_universal, gaunt_terpret, freer_towards}.

We fix a Universal Turing Machine (UTM), denoted $\mathcal U$, with a \emph{description tape} (which specifies the code of the Turing machine to be executed), a \emph{work tape} (simulating the tape of that Turing machine during its operation) and a \emph{state tape} (simulating the state of that Turing machine). The general statistical learning problem that can be formulated using $\mathcal U$ is the following: given some initial string $x$ on the work tape, predict the state of the simulated machine and the contents of the work tape after some specified number of steps.

For simplicity, in this paper we consider models that only predict the final state; the necessary modifications in the general case are routine. We also assume that $W$ parametrises Turing machines whose tape alphabet $\Sigma$ and set of states $Q$ have been encoded by individual symbols in the tape alphabet of $\mathcal U$. Hence $\mathcal U$ is actually what we call a \emph{pseudo-UTM} (see Appendix \ref{appendix:stagedpseudoutm}). Again, treating the general case is routine and for the present purposes only introduces uninteresting complexity.

Let $\Sigma$ denote the tape alphabet of the simulated machine, $Q$ the set of states and let $L, S, R$ stand for \emph{left}, \emph{stay} and \emph{right}, the possible motions of the Turing machine head. We assume that $|Q| > 1$ since otherwise the synthesis problem is trivial. The set of ordinary codes $W^{code}$ for a Turing machine sits inside a compact space of probability distributions $W$ over codes
\begin{equation}\label{eq:defnW}
\begin{split}
W^{code} &:= \prod_{\sigma,q} \Sigma \times Q \times \{L, S, R\} \\
                & \subseteq \prod_{\sigma, q} \Delta\Sigma \times\Delta Q \times\Delta\{L,S,R\} =: W
\end{split}
\end{equation}
where $\Delta X$ denotes the set of probability distributions over a set $X$, see (\ref{eq:simplex}), and the product is over pairs $(\sigma, q) \in \Sigma \times Q$.\footnote{The space $W$ of parameters is clearly semi-analytic, that is, it is cut out of $\mathbb{R}^d$ for some $d$ by the vanishing $f_1(x) = \cdots = f_r(x) = 0$ of finitely many analytic functions on open subsets of $\mathbb{R}^d$ together with finitely many inequalities $g_1(x) \ge 0, \ldots, g_s(x) \ge 0$ where the $g_j(x)$ are analytic. In fact $W$ is semi-algebraic, since the $f_i$ and $g_j$ may all be chosen to be polynomial functions.} For example the point $\{ (\sigma', q', d) \}_{\sigma, q} \in W^{code}$ encodes the machine which when it reads $\sigma$ under the head in state $q$ writes $\sigma'$, transitions into state $q'$ and moves in direction $d$. Given $w \in W^{code}$ let $\operatorname{step}^t(x,w) \in Q$ denote the contents of the state tape of $\mathcal U$ after $t$ timesteps (of the simulated machine) when the work tape is initialised with $x$ and the description tape with $w$. There is a principled extension of this operation of $\mathcal U$ to a smooth function
\begin{equation}\label{eq:propdeltastep}
\Delta \operatorname{step}^t: \Sigma^* \times W \rightarrow \Delta Q
\end{equation}
which propagates uncertainty about the symbols on the description tape to uncertainty about the final state and we refer to this extension as the \emph{smooth relaxation} of $\mathcal U$. The details are given in Appendix \ref{appendix:smoothutms} but at an informal level the idea behind the relaxation is easy to understand: to sample from $\Delta \operatorname{step}^t( x, w)$ we run $\mathcal U$ to simulate $t$ timesteps in such a way that whenever the UTM needs to ``look at'' an entry on the description tape we sample from the corresponding distribution specified by $w$.\footnote{Noting that this sampling procedure is repeated \emph{every time} the UTM looks at a given entry.} 

The class of models that we consider is
\begin{equation}\label{eq:model_p}
p(y|x, w) = \Delta \operatorname{step}^t( x, w )
\end{equation}
where $t$ is fixed for simplicity in this paper. More generally we could also view $x$ as consisting of a sequence and a timeout, as is done in \citep[\S 7.1]{clift_derivatives}.
The construction of this model is summarised in Figure \ref{figure:stepdiagram}. 

\begin{figure}[h]
\begin{center}
\includegraphics[scale=0.3]{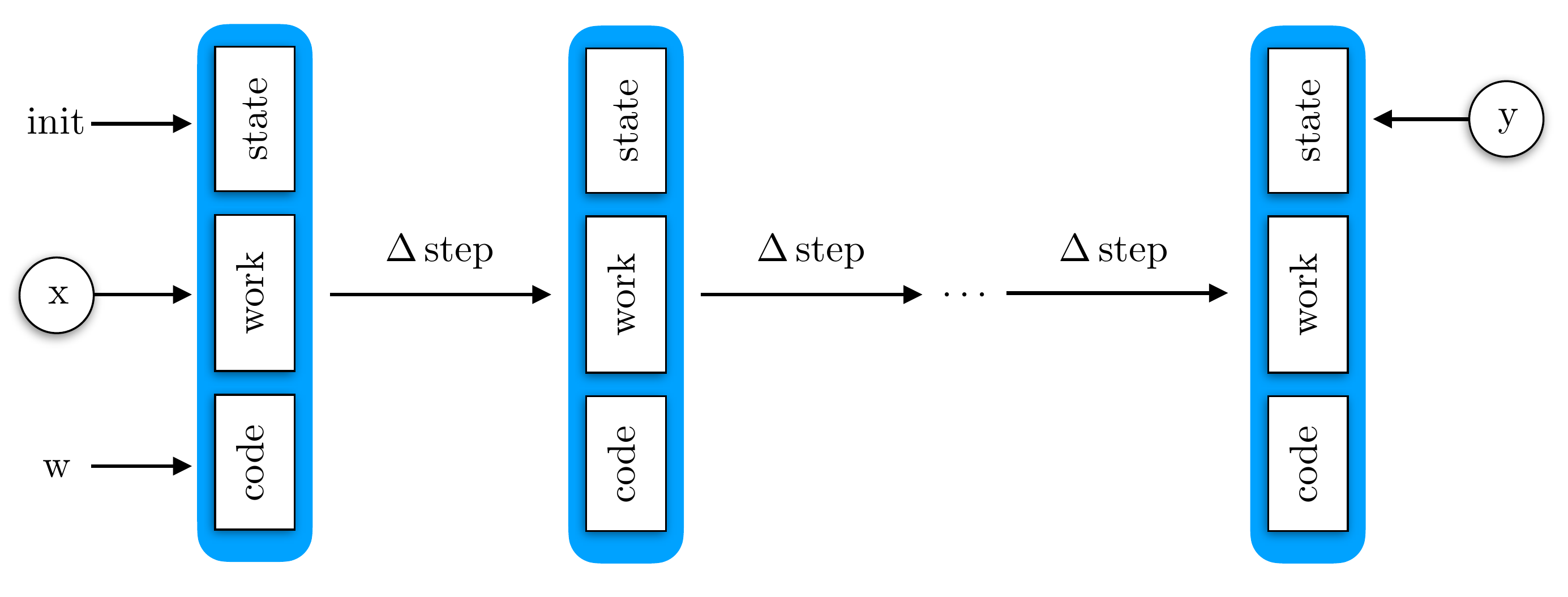}
\end{center}
\caption{The state of $\mathcal U$ is represented by the state of the work tape, state tape and description (code) tape. The work tape is initialised with a sequence $x \in \Sigma^*$, the code tape with $w \in W$ and the state tape with some standard initial state, the smooth relaxation $\Delta \operatorname{step}$ of the pseudo-UTM is run for $t$ steps and the final probability distribution over states is $y$.}
\label{figure:stepdiagram}
\end{figure}

\begin{definition}[(Synthesis problem)] A \emph{synthesis problem} for $\mathcal U$ consists of a probability distribution $q(x,y)$ over $\Sigma^* \times Q$. We say that the synthesis problem is \emph{deterministic} if there is $f: \Sigma^* \rightarrow Q$ such that $q(y = f(x) | x) = 1$ for all $x \in \Sigma^*$.
\end{definition}

\begin{definition}\label{defn:triple} The triple $(p,q,\varphi)$ associated to a synthesis problem is the model $p$ of (\ref{eq:model_p}) together with the true distribution $q$ and uniform prior $\varphi$ on $W$.
\end{definition}

As $\Delta \operatorname{step}^t$ is a polynomial function, $K$ is analytic and so $W_0$ is a semi-analytic space (it is cut out of the semi-analytic space $W$ by the vanishing of $K$). If the synthesis problem is deterministic and $q(x)$ is uniform on some finite subset of $\Sigma^*$ then $W_0$ is semi-algebraic (it is cut out of $W$ by polynomial equations) and all solutions lie at the boundary of the parameter space $W$ (Appendix~\ref{appendix:gen_sol_syn}). However in general $W_0$ is only semi-analytic and intersects the interior of $W$ (Example~\ref{example:special_shift_machine}). We assume that $q(y|x)$ is \emph{realisable} that is, there exists $w_0 \in W$ with $q(y|x) = p(y|x,w_0)$.

A triple $(p, q, \varphi)$ is \emph{regular} if the model is identifiable, ie. for all inputs $x\in\mathbb{R}^n$, the map sending $w$ to the conditional probability distribution $p(y|x,w)$ is one-to-one, and the Fisher information matrix is non-degenerate.  Otherwise, the learning machine is \emph{strictly singular} \citep[\S 1.2.1]{watanabe_algebraic}.
Triples arising from synthesis problems are typically singular: in Example \ref{example:detectA} below we show an explicit example where multiple parameters $w$ determine the same model, and in Example \ref{example:special_shift_machine} we give an example where the Hessian of $K$ is degenerate everywhere on $W_0$ \citep[\S 1.1.3]{watanabe_algebraic}.

\begin{remark} Non-deterministic synthesis problems arise naturally in various contexts, for example in the fitting of algorithms to the behaviour of deep reinforcement learning agents. Suppose an agent is acting in an environment with starting states encoded by $x \in \Sigma^*$ and possible episode end states by $y \in Q$. Even if the optimal policy is known to determine a computable function $\Sigma^* \rightarrow Q$ the statistics of the observed behaviour after finite training time will only provide a function $\Sigma^* \rightarrow \Delta Q$ and if we wish to fit algorithms to behaviour it makes sense to deal with this uncertainty directly.
\end{remark}

\begin{definition}\label{defn:rlct_syn_prob} 
Let $(p,q,\varphi)$ be the triple associated to a synthesis problem. The \emph{Real Log Canonical Threshold} (RLCT) $\lambda$ of the synthesis problem is defined so that $-\lambda$ is the largest pole of the meromorphic extension \citep{atiyah_resolution} of the zeta function $\zeta(z)=\int K(w)^z\varphi(w)dw$.
\end{definition}

This important quantity will be estimated in Section~\ref{section:rlctalgorithm}.  Intuitively, the more singular the analytic space $W_0$ of solutions is, the smaller the RLCT. One way to think of the RLCT is as a count of the effective number of parameters near $W_0$ \citep[\S 4]{murfet2020deep}.  A thermodynamic interpretation is provided in Section~\ref{section:thermo}.  In Section \ref{section:complexity} we relate the RLCT to Kolmogorov complexity and in Section \ref{section:experiments} we estimate the RLCT of the synthesis problem \detectA\, using our estimation method.

\begin{example}[(\textbf{\detectA})]\label{example:detectA} The deterministic synthesis problem \detectA\, has $\Sigma=\{\square, A, B\}$, $Q=\{\reject,\accept\}$ and $q(y|x)$ is determined by the function taking in a string $x$ of $A$'s and $B$'s and returning the state $\accept$ if the string contains an $A$ and state $\reject$ otherwise. The conditional true distribution $q(y|x)$ is realisable because this function is computed by a Turing machine. 

Two solutions are shown in Figure \ref{fig:detecta}. On the left is a parameter $w_l \in W_0 \setminus W^{code}$ and on the right is $w_r \in W_0 \cap W^{code}$. Varying the distributions in $w_l$ that have nonzero entropy we obtain a submanifold $V \subseteq W_0$ containing $w_l$ of dimension $14$. This leads by \citep[Remark 7.3]{watanabe_algebraic} to a bound on the RLCT of $\lambda \le \tfrac{1}{2}(30-14) = 8$ which is consistent with the experimental results in Table~\ref{table:rlct_detecta}. This highlights that solutions need not lie at vertices of the probability simplex, and $W_0$ may contain a high-dimensional submanifold around a given classical solution.

\begin{figure}[h]
\centering
\begin{subfigure}
  \centering
  \includegraphics[width=0.7\linewidth, clip]{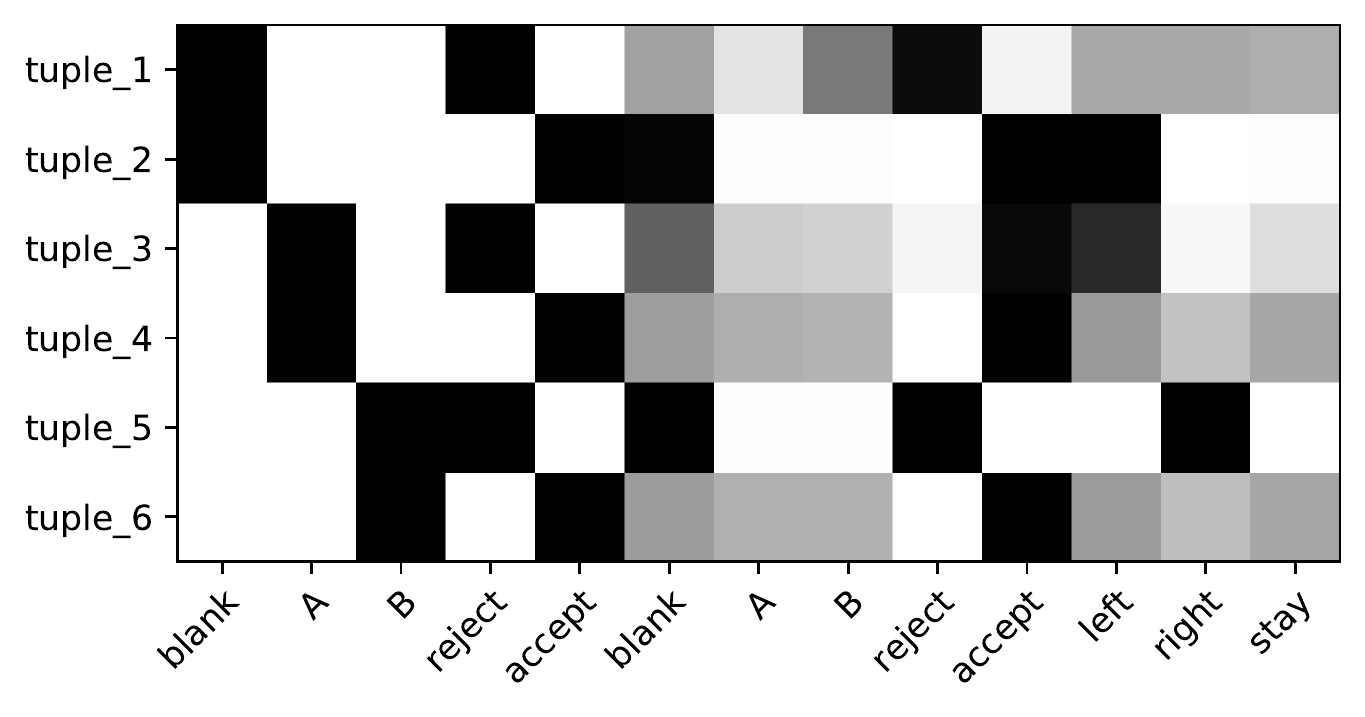}
  \label{fig:sub1}
\end{subfigure}%
\begin{subfigure}
  \centering
  \includegraphics[width=0.7\linewidth, clip]{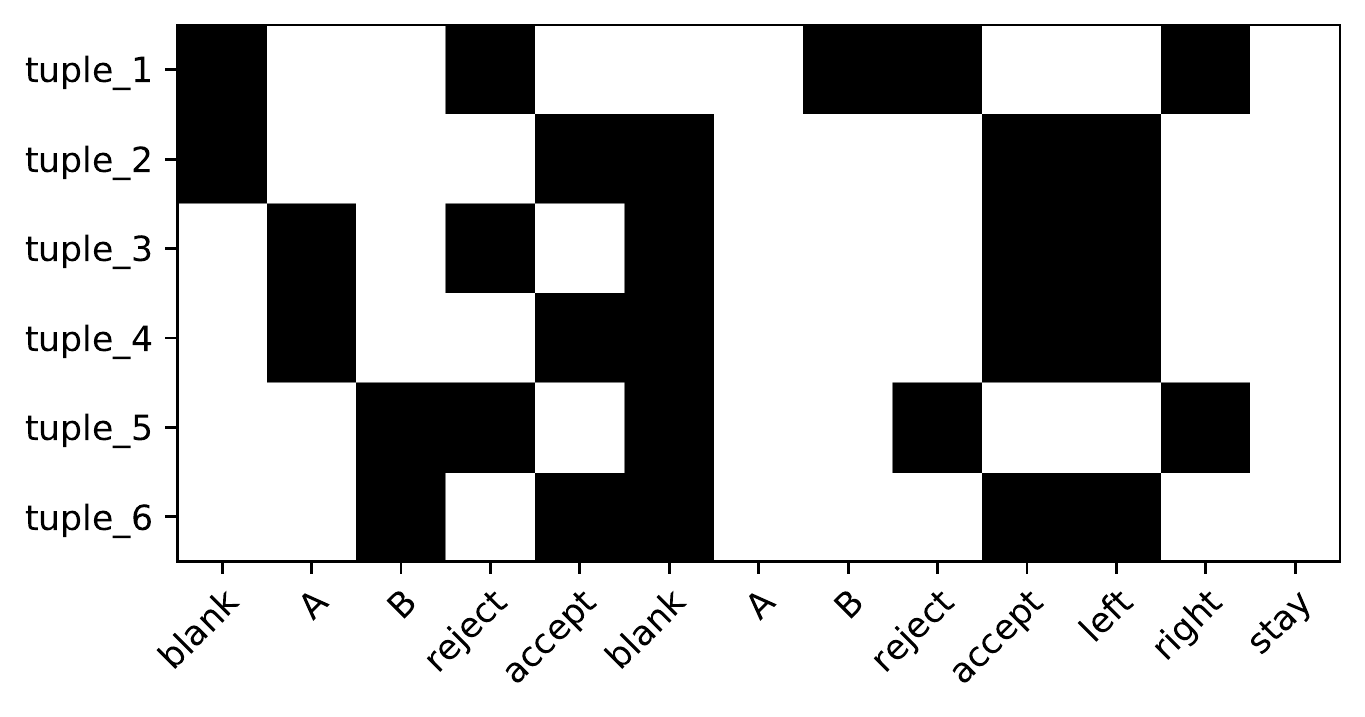}
  \label{fig:sub2}
\end{subfigure}
\caption{Visualisation of two solutions for the synthesis problem \detectA\, .}
\label{fig:detecta}
\end{figure} 
\end{example}

\section{Kolmogorov Complexity}
\label{section:complexity}

Every Turing machine is the solution of a deterministic synthesis problem so Section~\ref{section:tms} associates to any Turing machine a singularity of a semi-analytic space $W_0$. To indicate that this connection is not vacuous, we sketch how the length of a program is related to the real log canonical threshold of a singularity.

Let $q(x,y)$ be a deterministic synthesis problem for $\mathcal U$ which only involves input sequences in some restricted alphabet $\Sigma_{input}$, that is, $q(x) = 0$ if $x \notin (\Sigma_{input})^*$. Let $D_n$ be sampled from $q(x,y)$ and let $u,v \in W^{code} \cap W_0$ be two explanations for the sample in the sense that $K_n(u) = K_n(v) = 0$. Which explanation for the data should we prefer? The classical answer based on Occam's razor \citep{solomonoff_formal} is that we should prefer the shorter program, that is, the one using the fewest states and symbols. 

Set $N = |\Sigma|$ and $M = |Q|$. Any Turing machine $T$ using $N' \le N$ symbols and $M' \le M$ states has a code for $\mathcal U$ of length $c M' N'$ where $c$ is a constant. We assume that $\Sigma_{input}$ is included in the tape alphabet of $T$ so that $N' \ge |\Sigma_{input}|$ and define the \emph{Kolmogorov complexity} of $q$ with respect to $\mathcal U$ to be the infimum $\mathfrak{c}(q)$ of $M' N'$ over Turing machines $T$ that give classical solutions for $q$.

Let $\lambda$ be the RLCT of the triple $(p, q, \varphi)$ associated to the synthesis problem (Definition \ref{defn:rlct_syn_prob}).
 
\begin{theorem} 
$\lambda \le \tfrac{1}{2} (M+N) \mathfrak{c}(q)$.
\end{theorem}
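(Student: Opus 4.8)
The plan is to realise the infimum defining $\mathfrak{c}(q)$ by a concrete machine, embed it as a classical solution, and show that the transition rules \emph{not} used by that machine cut out a large submanifold of the zero set $W_0$; the bound on $\lambda$ then follows from the standard codimension bound on the RLCT, exactly as in Example~\ref{example:detectA}. Let $T$ be a Turing machine using $M'$ states and $N'$ symbols that gives a classical solution $w_T \in W^{code} \cap W_0$ with $M'N' = \mathfrak{c}(q)$ (the infimum is attained since $M' \le M$ and $N' \le N$ take only finitely many values). By (\ref{eq:defnW}), $W$ is a product of $MN$ copies of $\Delta\Sigma \times \Delta Q \times \Delta\{L,S,R\}$, indexed by pairs $(\sigma,q) \in \Sigma \times Q$, so $\dim W = MN(M+N)$.

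The crucial observation is that, because $q(x) = 0$ outside $(\Sigma_{input})^*$ and $\Sigma_{input}$ is contained in the alphabet of $T$, every run of $T$ relevant to $K$ stays within the $N'$ symbols and $M'$ states of $T$, and therefore $\mathcal U$ only consults the $M'N'$ entries of the description tape indexed by pairs $(\sigma,q)$ with $\sigma$ a symbol of $T$ and $q$ a state of $T$. Let $\widetilde w$ be obtained from $w_T$ by replacing, for each of the remaining $MN - M'N'$ pairs, the point-mass rule of $w_T$ by an arbitrary point of $\Delta\Sigma \times \Delta Q \times \Delta\{L,S,R\}$ with all coordinates positive. Since $w_T \in W^{code}$, the relaxation $\Delta\operatorname{step}^t(x,\cdot)$ evaluated at $w_T$ runs the deterministic machine $T$ and, by the previous sentence, never samples the entries in which $\widetilde w$ differs from $w_T$; hence $\Delta\operatorname{step}^t(x,\widetilde w) = \Delta\operatorname{step}^t(x, w_T)$ and so $p(y|x,\widetilde w) = p(y|x,w_T) = q(y|x)$ for every $x$ in the support of $q$, giving $K(\widetilde w) = 0$. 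Fixing the $M'N'$ relevant rules and letting the other $MN - M'N'$ factors of $W$ vary near the corresponding coordinates of $\widetilde w$ then sweeps out a submanifold $V \subseteq W_0$ through $\widetilde w$ with
\begin{equation*}
\dim V = (M+N)\bigl( MN - M'N' \bigr).
\end{equation*}

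It remains to apply \citep[Remark 7.3]{watanabe_algebraic}, which — as used in Example~\ref{example:detectA} — yields $\lambda \le \tfrac{1}{2}(\dim W - \dim V)$ whenever $W_0$ contains a submanifold $V$. Substituting the dimensions just computed,
\begin{equation*}
\lambda \;\le\; \tfrac{1}{2}\bigl( MN(M+N) - (M+N)(MN - M'N') \bigr) \;=\; \tfrac{1}{2}(M+N)M'N' \;=\; \tfrac{1}{2}(M+N)\,\mathfrak{c}(q),
\end{equation*}
which is the claim.

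The only genuine work should lie in the middle step: checking, from the definition of the smooth relaxation in Appendix~\ref{appendix:smoothutms}, that the description-tape look-ups performed by $\mathcal U$ while simulating $T$ on inputs over $\Sigma_{input}$ touch only the $M'N'$ rules of $T$, so that perturbing the other rules changes neither $\Delta\operatorname{step}^t$ nor $K$. This rests on two facts: at a classical solution the relaxation is deterministic, and the configurations reachable by $T$ from an input over its own alphabet involve only $T$'s symbols and states. A secondary, minor subtlety — already implicit in Example~\ref{example:detectA} — is that $\widetilde w$ lies on the boundary of the semi-analytic set $W$ (the rules of $T$ sit at vertices of simplices), so the codimension bound is being invoked for a compact semi-analytic parameter space with boundary rather than for an open analytic manifold.
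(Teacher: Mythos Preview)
Your proof is correct and follows essentially the same route as the paper: realise the infimum by a machine $T$ using $M'$ states and $N'$ symbols, observe that the simulation on inputs from $\Sigma_{input}^*$ never consults description-tape tuples outside $\Sigma'\times Q'$, and hence the submanifold $V\subseteq W_0$ obtained by freeing those tuples has $\operatorname{codim}(V)=M'N'(M+N)$, to which the Watanabe bound $\lambda\le\tfrac{1}{2}\operatorname{codim}(V)$ applies. Your write-up is in fact slightly more careful than the paper's (you note why the infimum is attained and flag the boundary issue), but the argument is the same.
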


The proof is deferred to Appendix~\ref{appendix:complexity}.

\begin{remark} The Kolmogorov complexity depends only on the \emph{number} of symbols and states used. The RLCT is a more refined invariant since it also depends on \emph{how} each symbol and state is used \citep[Remark 7.8]{clift_derivatives} as this affects the polynomials defining $W_0$ (see Appendix~\ref{appendix:gen_sol_syn}).
\end{remark}

\begin{remark} Let $s \in (\Sigma_{input})^*$ and consider the deterministic synthesis problem $q$ with $q(\emptyset) = 1$ where $\emptyset$ denotes the empty string, and $q( y = s | x = \emptyset ) = 1$. Here we consider the modification of the statistical model described in Section~\ref{section:tms} which predicts the contents of the work tape after $t$ steps rather than the state, and condition on the state being $\operatorname{halt}$. Then a classical solution to this synthesis problem is a Turing machine which halts within $t$ steps with $s$ on the work tape. If we make $t$ sufficiently large then the Kolmogorov complexity of $q$ in the above sense is the Kolmogorov complexity of $s$ in the usual sense, up to a constant multiplicative factor depending only on $\mathcal{U}$. Hence the terminology used above is reasonable.
\end{remark}

\section{Algorithm for Estimating RLCTs}
\label{section:rlctalgorithm}

We have stated that the RLCT $\lambda$ is the most important geometric invariant in singular learning theory (Section \ref{section:prelim}) and shown how it is related to computation by relating it to the Kolmogorov complexity (Section \ref{section:complexity}). Now we explain how to estimate it in practice.

Given a sample $D_n = \{(x_i,y_i)\}_{i =1 }^n$ from $q(x,y)$ let
\[
L_n(w) := -\frac{1}{n}\sum_{i=1}^n \log p(y_i | x_i,w)= K_n(w) + S_n
\]
be the negative log likelihood, where $S_n$ is the empirical entropy of the true distribution. We would like to estimate
\[  
\mathbb{E}^{\beta}_w[nL_n(w)] :=  \frac{1}{Z^\beta_n}\int nL_n(w)\varphi(w)\prod_{i=1}^n p(y_i|x_i,w)^{\beta}dw
\]
where $Z^\beta_n=\int\varphi(w)\prod_{i=1}^n p(y_i|x_i,w)^{\beta}dw$ for some inverse temperature $\beta$.  If $\beta = \frac{\beta_0}{\log n}$ for some constant $\beta_0$, then by Theorem~4 of \cite{watanabe_widely},
\begin{equation}\label{eq:EnLn_vs_lambda}
\mathbb{E}^\beta_w[nL_n(w)] = nL_n(w_0)+ \frac{\lambda\log n}{\beta_0} +U_n\sqrt{\frac{\lambda\log n}{2\beta_0}} +O_p(1)
\end{equation}
where $\{U_n\}$ is a sequence of random variables satisfying $\mathbb{E}[U_n]=0$ and $\lambda$ is the RLCT.  In practice, the last two terms often vary negligibly with $1/\beta$ and so $\mathbb{E}^\beta_w[nL_n(w)]$ approximates a linear function of $1/\beta$ with slope $\lambda$ \citep[Corollary 3]{watanabe_widely}. This is the foundation of the RLCT estimation procedure found in Algorithm~\ref{alg:thm4} which is used in our experiments. 

\begin{algorithm}
	\small
	\caption{RLCT estimation}
	\label{alg:thm4}
	\begin{algorithmic}
		\STATE {\bfseries Input:} range of $\beta$'s, set of training sets $\mathcal T$ each of size $n$, approximate samples $\{w_1,\ldots,w_R\}$ from $p^\beta(w|\mathcal D_n)$ for each training set $D_n$ and each $\beta$
		\FOR{training set $D_n \in \mathcal T$}
    		\FOR{$\beta$ in range of $\beta$'s}
        		\STATE Approximate ${\mathbb E}_w^\beta [nL_n(w)]$ with $\frac{1}{R} \sum_{i=1}^R nL_n(w_r)$ where $w_1,\ldots,w_R$ are approximate samples from $p^\beta(w|D_n)$
    		\ENDFOR
    		\STATE Perform generalised least squares to fit $\lambda$ in Equation~(\ref{eq:EnLn_vs_lambda}), call result $\hat \lambda(D_n)$
		\ENDFOR
		\STATE {\bfseries Output:} $\frac{1}{|\mathcal T|} \sum_{D_n \in \mathcal T} \hat \lambda(D_n)$
	\end{algorithmic}
\end{algorithm}

Each RLCT estimate $\hat \lambda(\mathcal D_n)$ in Algorithm~\ref{alg:thm4} was performed by linear regression on the pairs $\{ (1/\beta_i, \mathbb{E}^{\beta_i}_w[ nL_n(w) ] ) \}_{i=1}^5$ where the five inverse temperatures $\beta_i$ are centered on the inverse temperature $1/T$ where $T$ is the temperature reported for each experiment in Table \ref{table:rlct_detecta} and Table \ref{table:rlct_paritycheck}.

\section{Thermodynamics of program synthesis}
\label{section:thermo}

Now that we have formulated program synthesis in the setting of singular learning theory, we introduce the thermodynamic dictionary that this makes available. Given a triple $(p,q,\varphi)$ associated to a synthesis problem, the physical system consists of the space $W$ of microstates and the random Hamiltonian
\begin{equation}
H_n(w) = n K_n(w) - \frac{1}{\beta} \log \varphi(w)
\end{equation}
which depends on a sample $D_n$ from the true distribution of size $n$ and the inverse temperature $\beta$. The corresponding Boltzmann distribution is the tempered Bayesian posterior \eqref{eq:posterior} for the synthesis problem
\[
p^\beta(w|D_n) dw = \frac{1}{Z} \exp(- \beta H_n(w)) dw
\]
where $Z = \int_W \exp(-\beta H_n(w)) dw$ is called the partition function. In thermodynamics we think of functions $f(w)$ of microstates as fundamentally unobservable; what we can observe are averages
\[
\mathbb{E}^\beta_w\left[ f(w) \right] = \frac{1}{Z} \int_W f(w) \varphi(w) \exp(-n \beta K_n(w)) dw
\]
possibly restricted to subsets $W' \subseteq W$ (microstates compatible with some specified macroscopic condition). Taking program synthesis seriously leads us to systematically re-evaluate aspects of the theory of computation with this restriction (that only integrals are observable) in mind.

To give one important example: the \emph{energy} of the Boltzmann distribution (tempered Bayesian posterior) at a fixed inverse temperature $\beta$ and value of $n$ is by definition
\begin{align*}
U(n, \beta) &= \int dw \, p^\beta(w | D_n) H_n(w)\\
&= \int dw \, p^\beta(w | D_n)\big[ n K_n(w) - \frac{1}{\beta} \log \varphi(w) \big]\\
&= \mathbb{E}^\beta_w\big[ nK_n(w) ] + \frac{1}{\beta} \mathbb{E}^\beta_w\big[ - \log \varphi(w) \big]\\
&= \mathbb{E}^\beta_w\big[ nL_n(w) ] - n S_n + \frac{1}{\beta} \mathbb{E}^\beta_w\big[ - \log \varphi(w) \big]
\end{align*}
which is a random variable. The quantity $\frac{1}{\beta} \mathbb{E}^\beta_w\big[ - \log \varphi(w) \big]$ is the contribution to the energy from interaction with the prior, and by \eqref{eq:EnLn_vs_lambda} the most important contribution to the first summand is $\lambda T$ where $T = \frac{1}{\beta}$. We see the fundamental role that the RLCT plays in the thermodynamic picture.

\subsection{Programs as phases}

In physics a \emph{phase} is defined to be a local minima of some thermodynamic potential with respect to an extensive thermodynamic coordinate; for example the Gibbs potential as a function of volume $V$. The role of the volume may in principle be played by any analytic function $V: W \longrightarrow \mathbb{R}$ and in the following we assume such a function has been chosen and define the \emph{free energy} to be
\begin{equation}
F(n, \beta, v) = \log \int_{\{ w | V(w) = v \}} dw \varphi(w) \exp(-n \beta L_n(w))\,.
\end{equation}

\begin{definition} A \emph{phase} of the synthesis problem $q$ is a local minima of the free energy $F(n, \beta, v)$ as a function of $v$, at fixed values of $n, \beta$.
\end{definition}

Any classical solution $w_0$ determines a phase, that is, a local minimum of $F$ at $V_0 = V(w_0)$. Other solutions may determine phases, and in general there will be phases not associated to any solution.

Different phases have different ``physical'' properties, such as energy or entropy, calculated in the first case by restricting the integral defining $U(n, \beta)$ to a range of $V$ values near $V_0$. The methods of asymptotic analysis introduced by Watanabe show that these properties of phases are determined by geometric invariants, such as the RLCT. For example the arguments of Section \ref{section:complexity} show that the entropy of the phase associated to a classical solution may be bounded above (up to multiplicative constants depending only on $\mathcal{U}$) by a function of $n$ and the Kolmogorov complexity.

\subsection{A first-order phase transition}

To demonstrate the thermodynamic language in the context of program synthesis we give a qualitative treatment in this section of a simple example of a first-order phase transition.

Let $M_1, \ldots, M_r$ be Turing machines (not necessarily classical solutions) and $[M_1], \ldots, [M_r]$ the associated points of $W$ and set $V_i = V([M_i])$. We may approximate the free energy $F_i = F(V_i)$ for sufficiently large $n$ by \citep[Section 7.6]{watanabe_algebraic} as follows:
\begin{equation}\label{eq:free_energy_approx}
F_i \approx n \beta L_n( [M_i] ) + \lambda_i \log n
\end{equation}
where $\lambda_i$ is the RLCT of the level set $\{ w | K(w) = K([M_i]) \}$ at $[M_i]$ and the free energies are defined up to an additive constant that is the same for each $i$ (and which we may ignore if, as we do, we only want to consider ordinality of the free energies). 

By the same argument as Section \ref{section:complexity} we have $\lambda_i \le c \, l( M_i )$ where $c$ is a constant depending only on $\mathcal{U}$ and $l( M_i )$ is the ``length'' of the code for $M_i$, taken to be the product of the number of used states and symbols. This demonstrates the following inequalities for $1 \le i \le r$
\begin{equation}\label{eq:inequalities}
n \beta L_n([M_i]) \le F_i \le n \beta L_n([M_i]) + c\, l( M_i ) \log(n)\,.
\end{equation}
For $n$ sufficiently large we may take $L_i \approx L_n([M_i])$ to be constant, at least for the purposes of comparing free energies. Suppose that $L_i > L_j$ but $l(M_j) > l(M_i)$, that is, the Turing machine $M_j$ produces outputs closer to the samples than $M_i$, but its code is longer. It is possible for small $n$ that the free energy $F_i$ is nonetheless \emph{lower} than $F_j$ (and thus samples from the posterior are more likely to come from $V = V_i$ than $V = V_j$) since the penalty for longer programs is relatively harsher when $n$ is small.

However as $n \rightarrow \infty$ it is easy to see using the inequalities \eqref{eq:inequalities} that eventually $F_i > F_j$ whenever $L_i > L_j$. Hence as the system is exposed to more input-output examples it undergoes a series of first-order phase transitions, where the phases of low entropy but high energy are eventually supplanted as the global minima by phases with low energy.

\section{Experiments}
\label{section:experiments}

We estimate the RLCT for the triples $(p, q, \varphi)$ associated to the synthesis problems \detectA\, (Example \ref{example:detectA}) and \parityCheck\, (Appendix~\ref{appendix:paritycheck}).  Hyperparameters of the various machines are contained in Table~\ref{table:hyper} of Appendix~\ref{appendix:hyperparameters}. The true distribution $q(x)$ is defined as follows: we fix a minimum and maximum sequence length $a \le b$ and to sample $x \sim q(x)$ we first sample a length $l$ uniformly from $[a,b]$ and then uniformly sample $x$ from $\{A,B\}^l$.  

We perform MCMC on the weight vector for the model class $\{p(y|x,w):w\in W\}$ where $w$ is represented in our PyTorch implementation by three tensors of shape $\{[L,n_i]\}_{1\leq i\leq 3}$ where $L$ is the number of tuples in the description tape of the TM being simulated and $\{n_i\}$ are the number of symbols, states and directions respectively. A \emph{direct simulation} of the UTM is used for all experiments to improve computational efficiency (Appendix~\ref{appendix:directsimulation}). We generate, for each inverse temperature $\beta$ and dataset $D_n$, a Markov chain via the No-U-Turn sampler from \cite{hoffman_no}.  We use the standard uniform distribution as our prior $\varphi$.

\begin{table}[h]
    \scriptsize
    \begin{center}
    \begin{tabular}
    {c c c c c}
    \toprule
      \textbf{Max-length}  & \textbf{Temperature}  & \textbf{RLCT} & \textbf{Std} & \textbf{R squared}\\ 
    \midrule
    7 & $\log(500)$ & 8.089205 & 3.524719 & 0.965384\\
    7 & $\log(1000)$ & 6.533362 & 2.094278 & 0.966856\\
    8 & $\log(500)$ & 4.601800 & 1.156325 & 0.974569\\
    8 & $\log(1000)$ & 4.431683 & 1.069020 & 0.967847\\
    9 & $\log(500)$ & 5.302598 & 2.415647 & 0.973016\\
    9 & $\log(1000)$ & 4.027324 & 1.866802 & 0.958805\\
    10 & $\log(500)$ & 3.224910 & 1.169699 & 0.963358\\
    10 & $\log(1000)$ & 3.433624 & 0.999967 & 0.949972\\
   \bottomrule
   \end{tabular}
    \end{center}
    \caption{\normalsize RLCT estimates for \detectA.}
    \label{table:rlct_detecta}
\end{table}

For the problem \detectA\, given in Example~\ref{example:detectA} the dimension of parameter space is $\dim W=30$. We use generalised least squares to fit the RLCT $\lambda$ (with goodness-of-fit measured by $R^2$), the algorithm of which is given in Section~\ref{section:rlctalgorithm}.  Our results are displayed in Table~\ref{table:rlct_detecta} and Figure~\ref{figure:RLCTplot_detecta}.  Our purpose in these experiments is not to provide high accuracy estimates of the RLCT, as these would require much longer Markov chains. Instead we demonstrate how rough estimates consistent with the theory can be obtained at low computational cost.  If this model were regular the RLCT would be $\dim W / 2 = 15$.

\begin{figure}[h]
\begin{center}
\includegraphics[scale=0.55]{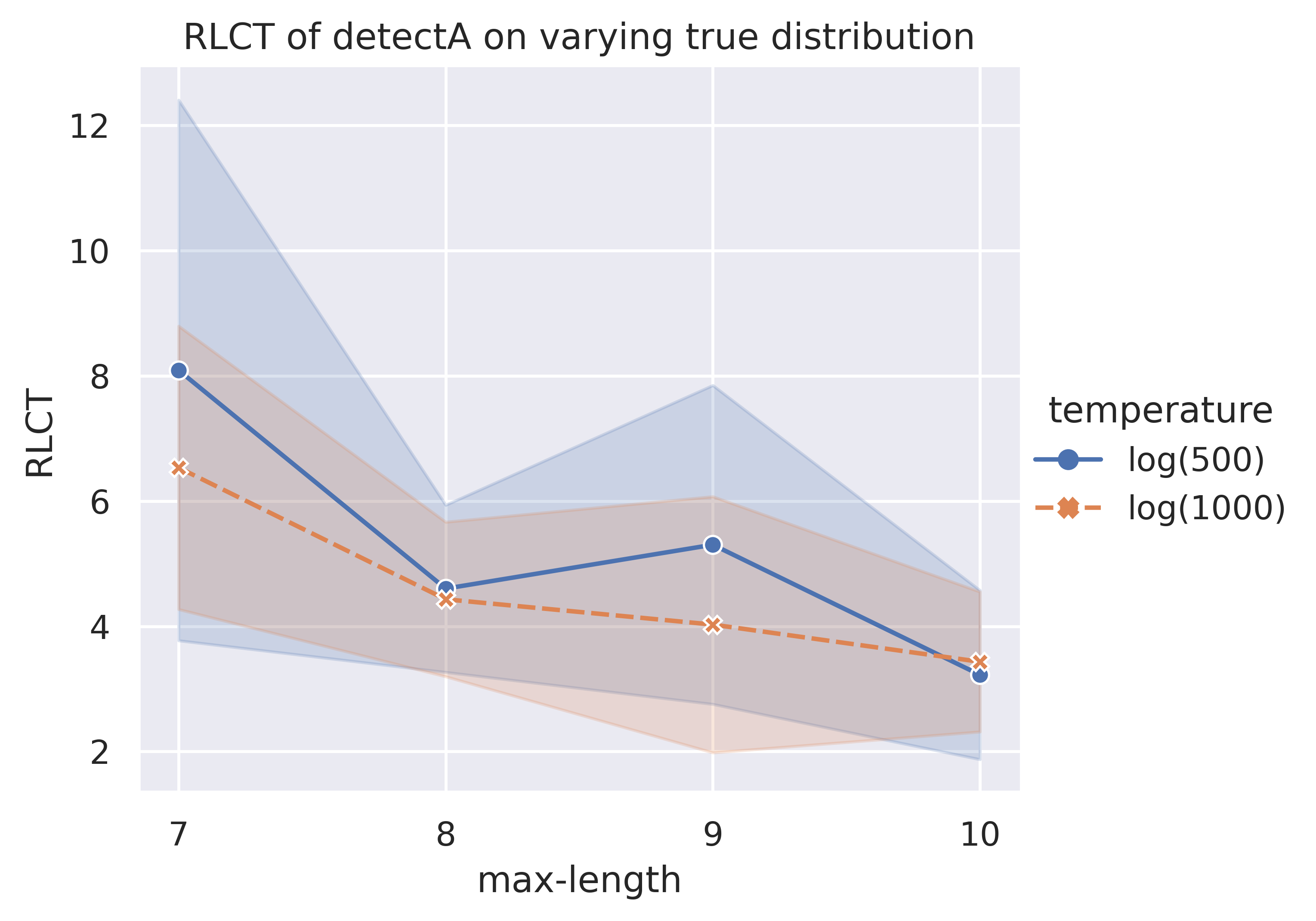}
\end{center}
\caption{Plot of RLCT estimates for \detectA. Shaded region shows one standard deviation.}
\label{figure:RLCTplot_detecta}
\end{figure}

\section{Discussion}

We have developed a theoretical framework in which all programs can in principle be synthesised from input-output examples.  This is done by propagating uncertainty through a smooth relaxation of a universal Turing machine. In approaches to program synthesis based on gradient descent there is a tendency to think of solutions to the synthesis problem as isolated critical points of the loss function $K$, but this is a false intuition based on regular models. 

Since neural networks, Bayesian networks, smooth relaxations of UTMs and all other extant approaches to smooth program synthesis are strictly singular models (the map from parameters to functions is not injective) the set $W_0$ of parameters $w$ with $K(w) = 0$ is a complex extended object, whose geometry is shown by Watanabe's singular learning theory to be deeply related to the learning process. We have examined this geometry in several specific examples and shown how to think about programs from a geometric and thermodynamic perspective. It is our hope that this approach can assist in developing the next generation of synthesis machines.

\bibliography{arxiv_v1}
\bibliographystyle{icml2021}

\clearpage
\appendix

\section{Practical implications}\label{appendix:implications}

Using singular learning theory we have explained how programs to be synthesised are singularities of analytic functions, and how the Kolmogorov complexity of a program bounds the RLCT of the associated singularity. We now sketch some practical insights that follow from this point of view.

\textbf{Why synthesis gets stuck:} the kind of local minimum of the free energy that we \emph{want} the synthesis process to find are solutions $w_\alpha \in W_0$ where $\lambda_\alpha$ is minimal. By Section~\ref{section:complexity} one may think of these points as the ``lowest complexity'' solutions. However it is possible that there are other local minima of the free energy. Indeed, there may be local minima where the free energy is \emph{lower than the free energy at any solution} since at finite $n$ it is possible to tradeoff an increase in $K_\alpha$ against a decrease in the RLCT $\lambda_\alpha$. In practice, the existence of such ``siren minima'' of the free energy may manifest itself as regions where the synthesis process gets stuck and fails to converge to a solution. In such a region $K_\alpha n + \lambda_\alpha \log(n) < \lambda \log(n)$ where $\lambda$ is the RLCT of the synthesis problem. In practice it has been observed that program synthesis by gradient descent often fails for complex problems in the sense that it fails to converge to a solution \citep{gaunt_terpret}. While synthesis by SGD and sampling are different, it is a reasonable hypothesis that these siren minima are a significant contributing factor in both cases.

\textbf{Can we avoid siren minima?} If we let $\lambda_c$ denote the RLCT of the level set $W_c$ then siren minima of the free energy will be impossible at a given value of $n$ and $c$ as long as $\lambda_c \ge \lambda - c \tfrac{n}{\log(n)}$ . Recall that the more singular $W_c$ is the lower the RLCT, so this lower bound says that the level sets should not become too singular too quickly as $c$ increases. At any given value of $n$ there is a ``siren free'' region in the range $c \ge \tfrac{\lambda \log(n)}{n}$ since the RLCT is non-negative (Figure~\ref{figure:siren}). Thus the learning process will be more reliable the smaller $\tfrac{\lambda \log(n)}{n}$ is. This can be arranged either by increasing $n$ (providing more examples) or decreasing $\lambda$. 

While the RLCT is determined by the synthesis problem, it is possible to change its value by changing the structure of the UTM $\mathcal U$. As we have defined it $\mathcal U$ is a ``simulation type'' UTM, but one could for example add special states such that if a code specifies a transition into that state a series of steps is executed by the UTM (i.e. a subroutine). This amounts to specifying codes in a higher level programming language. Hence one of the practical insights that can be derived from the geometric point of view on program synthesis is that varying this language is a natural way to engineer the singularities of the level sets of $K$, which according to singular learning theory has direct implications for the learning process.

\begin{figure}[h]
\begin{center}
\includegraphics[scale=0.35]{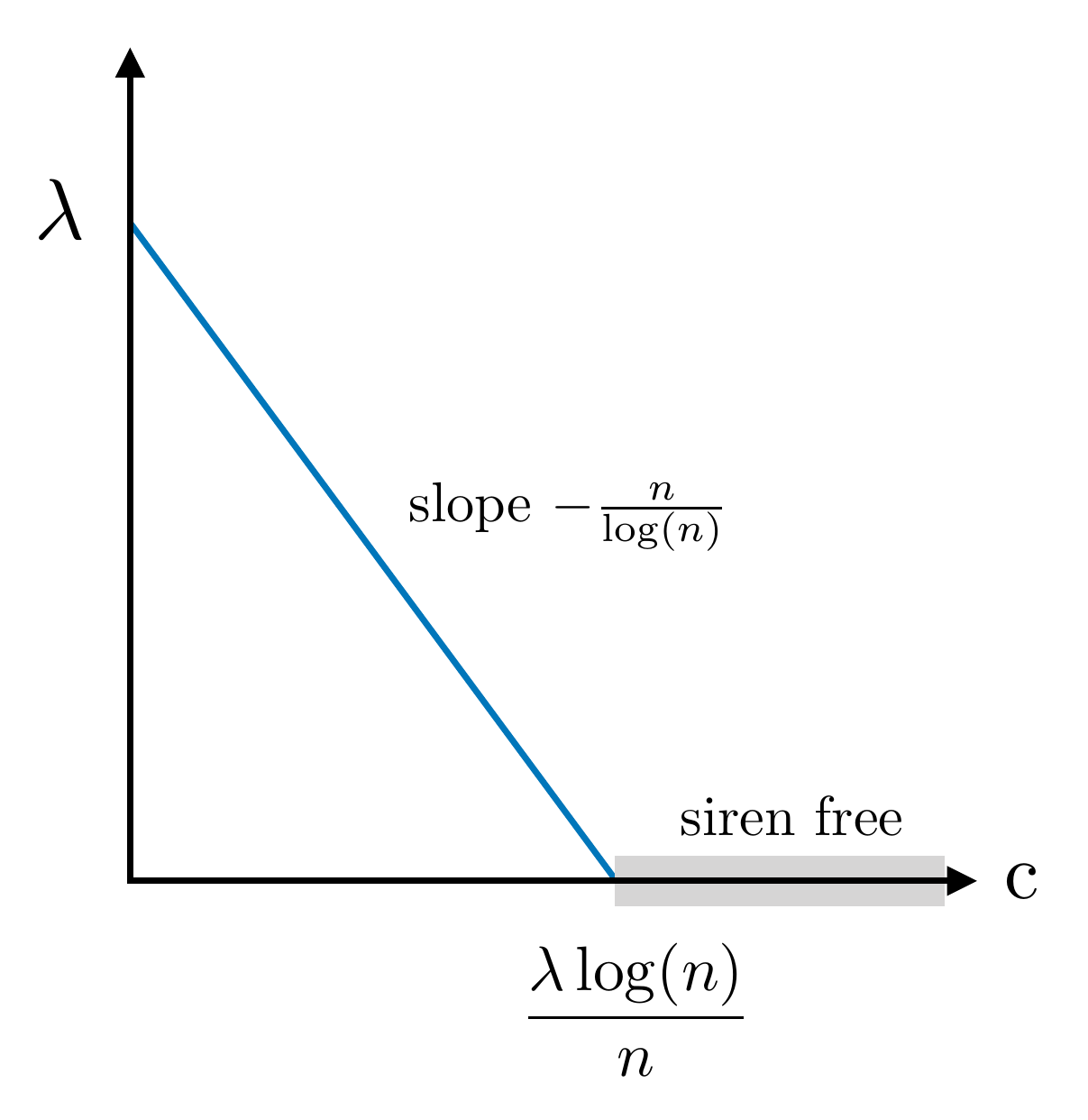}
\end{center}
\caption{Level sets above the cutoff cannot contain siren local minima of the free energy.}
\label{figure:siren}
\end{figure}

\section{Deterministic synthesis problems}\label{appendix:gen_sol_syn}

In this section we consider the case of a deterministic synthesis problem $q(x,y)$ which is finitely supported in the sense that there exists a finite set $\mathcal{X} \subseteq \Sigma^*$ such that $q(x) = c$ for all $x \in \mathcal{X}$ and $q(x) = 0$ for all $x \notin \mathcal{X}$. We first need to discuss the coordinates on the parameter space $W$ of (\ref{eq:defnW}). To specify a point on $W$ is to specify for each pair $(\sigma, q) \in \Sigma \times Q$ (that is, for each tuple on the description tape) a triple of probability distributions
\begin{gather*}
\sum_{\sigma' \in Q} x^{\sigma, q}_{\sigma'} \cdot \sigma' \in \Delta \Sigma\,,\\
\sum_{q' \in Q} y^{\sigma, q}_{q'} \cdot q' \in \Delta Q\,,\\
\sum_{d \in \{L, S, R\}} z^{\sigma, q}_d \cdot d \in \Delta \{ L, S, R \}\,.
\end{gather*}
The space $W$ of distributions is therefore contained in the affine space with coordinate ring
\[
R_W = \mathbb{R}\big[ \big\{ x^{\sigma, q}_{\sigma'} \big\}_{\sigma, q, \sigma'}, \big\{ y^{\sigma, q}_{q'} \big\}_{\sigma, q, q'}, \big\{ z^{\sigma, q}_d \big\}_{\sigma, q, d} \big]\,.
\]
The function $F^x = \Delta \operatorname{step}^t(x,-): W \rightarrow \Delta Q$ is polynomial \citep[Proposition 4.2]{clift_derivatives} and we denote for $s \in Q$ by $F^x_s \in R_W$ the polynomial computing the associated component of the function $F^x$. Let $\partial W$ denote the boundary of the manifold with corners $W$, that is, the set of all points on $W$ where at least one of the coordinate functions given above vanishes
\[
\partial W = \mathbb{V}\big( \prod_{\sigma, q} \Big[ \prod_{\sigma' \in Q} x^{\sigma, q}_{\sigma'} \prod_{q' \in Q} y^{\sigma, q}_{q'} \prod_{d \in \{L,S,R\}} z^{\sigma, q}_{d} \Big] \big)
\]
where $\mathbb{V}(h)$ denotes the vanishing locus of $h$.

\begin{lemma}\label{lemma:nontrivial} $W_0 \neq W$.
\end{lemma}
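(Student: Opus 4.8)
The plan is to exhibit a single parameter $w \in W$ with $K(w) > 0$, which immediately gives $W_0 \neq W$. The cleanest choice is the ``maximally uncertain'' point $w^* \in W$ at the barycentre of every simplex, i.e. the point all of whose coordinates $x^{\sigma,q}_{\sigma'}, y^{\sigma,q}_{q'}, z^{\sigma,q}_{d}$ are equal (so each distribution in the triple is uniform). At such a point, by the definition of the smooth relaxation, every time the pseudo-UTM consults the description tape it samples a uniformly random symbol/state/direction; intuitively the simulation performs a random walk and the resulting distribution $\Delta\operatorname{step}^t(x, w^*) \in \Delta Q$ is genuinely spread out rather than concentrated on a single state.

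The key steps, in order: (1) Recall from the realisability assumption that there is $w_0 \in W$ with $p(y|x,w_0) = q(y|x)$, and recall $K(w) = \int\!\!\int q(y|x)q(x)\log\frac{q(y|x)}{p(y|x,w)}\,dx\,dy$, so $K(w) = 0$ iff $p(y|x,w) = q(y|x)$ for all $x$ with $q(x)>0$. Since $|Q| > 1$ was assumed, $q(y|x)$ is not the uniform distribution on $Q$ for at least one relevant $x$ — indeed for a deterministic problem $q(\cdot|x)$ is a point mass, and even in the non-deterministic realisable case one checks $q(\cdot|x)$ cannot be uniform for \emph{every} relevant $x$ unless the problem is degenerate. (2) Show $p(y|x, w^*) = \Delta\operatorname{step}^t(x,w^*)$ assigns strictly positive probability to at least two states in $Q$, or more simply, that it is \emph{not} equal to the required $q(\cdot|x)$; by symmetry of $w^*$ under relabelling of description-tape symbols one expects $\Delta\operatorname{step}^t(x,w^*)$ to be invariant under the induced action on $Q$, forcing it to differ from a point mass. (3) Conclude $K(w^*) > 0$, hence $w^* \notin W_0$, hence $W_0 \neq W$.

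The main obstacle is step (2): making precise that the fully-uniform parameter does not reproduce $q(\cdot|x)$. One must unwind the definition of $\Delta\operatorname{step}^t$ from Appendix~\ref{appendix:smoothutms} enough to see that at $w^*$ the output distribution over $Q$ has support of size at least two (equivalently, positive entropy) for some input $x$ in the support of $q$. A robust way to do this without heavy computation is the symmetry argument: the symmetric group acting on the encodings of $Q$ (and compatibly on $\Sigma$) fixes $w^*$, so $\Delta\operatorname{step}^t(x, w^*)$ must be a fixed point of the corresponding action on $\Delta Q$; since $|Q|>1$ the only such fixed point is the uniform distribution, which is never a point mass and (in the realisable non-deterministic case) cannot equal $q(\cdot|x)$ for all relevant $x$ simultaneously. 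If one prefers to avoid the symmetry argument entirely, an even more elementary route suffices for the deterministic case: pick any classical solution $u \in W^{code} \cap W_0$ (which exists by realisability together with the remark that deterministic realisable problems are computed by a Turing machine), and perturb a single coordinate of $u$ off a vertex of its simplex; one checks directly that this changes $\Delta\operatorname{step}^t(x,\cdot)$ for some $x$, giving a nearby point with $K > 0$. Either way the statement follows; the symmetry argument is the more conceptual and is what I would present.
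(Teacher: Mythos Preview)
Your overall strategy---exhibit a single $w\in W$ with $K(w)>0$---is correct and is what the paper does too, but the execution and the choice of $w$ differ.

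\textbf{Gap in the symmetry argument.} The claim that the action of $S_Q$ on $W$ makes $\Delta\operatorname{step}^t(x,\,\cdot\,)$ equivariant is not quite right: the map $\Delta\operatorname{step}^t$ is defined with a \emph{fixed} initial state $\text{init}\in Q$ on the state tape (Definition of $\Delta\operatorname{step}^t$ in Appendix~\ref{appendix:smoothutms}). Relabelling $Q$ would have to relabel this initial state as well for the UTM dynamics to be equivariant, so the full symmetric group does not act as you assert, and the ``only fixed point is uniform'' step does not follow formally. Your conclusion that $\Delta\operatorname{step}^t(x,w^*)$ is uniform on $Q$ is nonetheless correct, but for a more pedestrian reason: at $w^*$ \emph{every} tuple on the description tape outputs the uniform distribution on $Q$, so regardless of which tuple is consulted the staging tape receives the uniform state distribution, and hence after one simulated period the state tape is uniform and stays uniform (this is immediate from the update formulas in Appendix~\ref{appendix:directsimulation}). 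With that fix, your argument goes through in the deterministic, finitely supported setting of Appendix~\ref{appendix:gen_sol_syn}, since $q(\,\cdot\,|x)$ is a point mass and $|Q|>1$. Your fallback perturbation argument is weaker still: perturbing a single coordinate of a classical solution $u$ need not change $\Delta\operatorname{step}^t(x,\,\cdot\,)$ for any $x$ in the support of $q$, since that coordinate may index an unused $(\sigma,q)$ pair.

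\textbf{Comparison with the paper.} The paper bypasses all of this with a one-line construction that stays inside $W^{code}$: choose any $x\in\mathcal X$, let $y$ be the unique state with $q(y|x)=1$, and let $w\in W^{code}$ be the code for the Turing machine that, regardless of symbol and state, writes nothing new, transitions to a fixed state $s\neq y$, and stays. Then $\Delta\operatorname{step}^t(x,w)$ is the point mass at $s$, so $p(y|x,w)=0$ and $K(w)>0$. This avoids any analysis of the smooth relaxation and uses only the assumption $|Q|>1$ to pick $s\neq y$. Your barycentre argument is conceptually appealing but buys nothing extra here; the paper's choice is both shorter and more robust.
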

\begin{proof}
Choose $x \in \mathcal X$ with $q(x) > 0$ and let $y$ be such that $q(y|x) = 1$. Let $w \in W^{code}$ be the code for the Turing machine which ignores the symbol under the head and current state, transitions to some fixed state $s \neq y$ and stays. Then $w \notin W_0$.
\end{proof}

\begin{lemma}\label{lemma:semi-algebraic} The set $W_0$ is semi-algebraic and $W_0 \subseteq \partial W$.
\end{lemma}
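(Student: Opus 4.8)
The plan is to treat the two assertions in turn. For semi-algebraicity I would first exploit that $q$ is deterministic and finitely supported: with $q(x) = c$ on the finite set $\mathcal{X}$ and $q(y\mid x) = 1$ exactly for $y = f(x)$, the Kullback--Leibler divergence collapses to $K(w) = -c\sum_{x\in\mathcal{X}}\log p(f(x)\mid x,w)$ (using $0\log 0 = 0$). Since every $p(f(x)\mid x,w)\in[0,1]$, each summand $\log p(f(x)\mid x,w)$ is $\le 0$, so $K(w) = 0$ if and only if $p(f(x)\mid x,w) = 1$ for all $x\in\mathcal{X}$, i.e. $F^x_{f(x)}(w) = 1$ for all $x\in\mathcal{X}$. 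By \citep[Proposition 4.2]{clift_derivatives} each $F^x_{f(x)}$ is a polynomial in $R_W$, so $W_0$ is the intersection of the semi-algebraic set $W$ (footnote after \eqref{eq:defnW}) with finitely many polynomial hypersurfaces, hence semi-algebraic.

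For $W_0\subseteq\partial W$ I would prove the contrapositive: no interior point of $W$ is a solution. Let $w$ be interior, so every coordinate $x^{\sigma,q}_{\sigma'}$, $y^{\sigma,q}_{q'}$, $z^{\sigma,q}_d$ is strictly positive and every transition distribution encoded by $w$ has full support. Fix any $x\in\mathcal{X}$ (nonempty since $q$ is a probability distribution) and, using $|Q| > 1$, pick a state $s\neq f(x)$. Reading off the operational semantics of $\Delta\operatorname{step}^t$ from Appendix~\ref{appendix:smoothutms} --- a sample is obtained by simulating $t$ steps of the object machine, sampling the write-symbol, next-state, and head motion from the distributions $w$ assigns to the currently scanned (symbol, state) pair at each step --- consider the event that at every one of the $t$ simulated steps the sampled next-state is $s$ and the sampled motion is $S$ (stay). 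Each such individual sample has positive probability by full support, and there are only $t$ such lookups, so this event has positive probability; on it the state tape holds $s$ after $t$ steps. Hence $p(s\mid x,w) > 0$, so $p(f(x)\mid x,w)\le 1 - p(s\mid x,w) < 1$, giving $K(w) > 0$ and $w\notin W_0$.

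I expect the second part to be the main obstacle, since it requires pinning down enough of the mechanics of the smooth relaxation --- in particular the ``resample on every description-tape lookup'' behaviour, and that the predicted quantity is the content of the state tape after $t$ steps --- to be certain the exhibited ``escape path'' genuinely has positive probability for every interior $w$. A minor point to dispatch in passing is the degenerate case $t = 0$, where $p(y\mid x,w)$ does not depend on $w$ at all: realisability would then force $W_0 = W$, contradicting Lemma~\ref{lemma:nontrivial}, so we may assume $t\ge 1$ and the argument above applies.
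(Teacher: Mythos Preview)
Your semi-algebraicity argument is correct and essentially the paper's: collapse $K$ via determinism and finite support to the polynomial conditions $F^x_{f(x)}(w)=1$ and intersect with the semi-algebraic $W$.

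For $W_0\subseteq\partial W$ you take a genuinely different route. The paper argues algebraically: it rewrites the condition as $F^x_s(w)=0$ for all $s\neq f(x)$, invokes the explicit form of $F^x_s$ from \cite{clift_derivatives} as a sum of monomials in the coordinate functions with non-negative integer coefficients, so that vanishing on $W$ forces some coordinate to be zero --- unless every such $F^x_s$ is identically the zero polynomial, which gives $W_0=W$ and contradicts Lemma~\ref{lemma:nontrivial}. You instead exhibit a concrete sample path (always transition to a fixed $s\neq f(x)$, always stay) with positive probability at any interior $w$; this is more direct and sidesteps Lemma~\ref{lemma:nontrivial} for $t\ge1$. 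The obstacle you flag is real: the sampling description in Section~\ref{section:tms} is explicitly called informal, and the model is actually defined by the naive-Bayesian rules of Appendix~\ref{appendix:smoothutms}, which factor joints into products of marginals at each step and are not \emph{a priori} the same as honest resampling. What makes your sample path name a genuine monomial of $F^x_s$ is precisely the polynomial structure the paper imports from \cite{clift_derivatives}; alternatively one can check directly from the update formulas in Appendix~\ref{appendix:directsimulation} that for interior $w$ the state-tape distribution acquires full support after one simulated step and retains it thereafter. Either justification completes your argument; the paper simply trades your constructive step for an existential one via Lemma~\ref{lemma:nontrivial}.
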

\begin{proof}
Given $x \in \Sigma^*$ with $q(x) > 0$ we write $y = y(x)$ for the unique state with $q(x,y) \neq 0$. In this notation the Kullback-Leibler divergence is
\begin{align*}
K(w) &= \sum_{x \in \mathcal{X}} c\, D_{KL}( y \Vert F^{x}(w) ) \\
  &= - c \sum_{x \in \mathcal{X}} \log F^{x}_{y}(w) = - c \log \prod_{x \in \mathcal{X}} F^{x}_{y}(w)\,.
\end{align*}
Hence
\begin{align*}
W_0 &= W \cap \bigcap_{x \in \mathcal{X}} \mathbb{V}( 1 - F^x_y(w) )
\end{align*}
is semi-algebraic.

Recall that the function $\Delta \operatorname{step}^t$ is associated to an encoding of the UTM in linear logic by the Sweedler semantics \citep{clift_derivatives} and the particular polynomials involved have a form that is determined by the details of that encoding \citep[Proposition 4.3]{clift_derivatives}. From the design of our UTM we obtain positive integers $l_\sigma, m_q, n_d$ for $\sigma \in \Sigma, q \in Q, d \in \{ L, S, R \}$ and a function $\pi: \Theta \rightarrow Q$ where
\[
\Theta = \prod_{\sigma, q} \Sigma^{l_\sigma} \times Q^{m_q} \times \{ L, S, R \}^{n_d}\,.
\]
We represent elements of $\Theta$ by tuples $(\mu, \zeta, \xi) \in \Theta$ where $\mu(\sigma, q, i) \in \Sigma$ for $\sigma \in \Sigma, q \in Q$ and $1 \le i \le l_\sigma$ and similarly $\zeta(\sigma, q, j) \in Q$ and $\xi(\sigma, q, k) \in \{L,S,R\}$. The polynomial $F^x_s$ is
\begin{align*}
F^x_s =& \sum_{(\mu, \zeta, \xi) \in \Theta} \delta( s = \pi(\mu,\zeta,\xi) ) \times\\
  &\prod_{\sigma, q} \Big[ \prod_{i=1}^{l_\sigma} x^{\sigma, q}_{\mu(\sigma, q, i)} \prod_{j=1}^{m_q} y^{\sigma, q}_{\zeta(\sigma, q, j)} \prod_{k = 1}^{n_d} z^{\sigma, q}_{\xi(\sigma, q, k)} \Big]
\end{align*}
where $\delta$ is a Kronecker delta. With this in hand we may compute
\begin{align*}
W_0 &= W \cap \bigcap_{x \in \mathcal{X}} \mathbb{V}( 1 - F^x_y(w) )\\
&= W \cap \bigcap_{x \in \mathcal{X}} \bigcap_{s \neq y} \mathbb{V}( F^x_s(w) )\,.
\end{align*}
But $F^x_s$ is a polynomial with non-negative integer coefficients, which takes values in $[0,1]$ for $w \in W$. Hence it vanishes on $w$ if and only if for each triple $\mu, \zeta, \xi$ with $s = \pi(\mu, \zeta, \xi)$ one or more of the coordinate functions $x^{\sigma, q}_{\mu(\sigma, q,i)}, y^{\sigma, q}_{\zeta(\sigma, q, j)}, z^{\sigma, q}_{\xi(\sigma, q,k)}$ vanishes on $w$.

The desired conclusion follows unless for every $x \in \mathcal{X}$ and $(\mu, \zeta, \xi) \in \Theta$ we have $\pi(\mu, \zeta, \xi) = y$ so that $F^x_s = 0$ for all $s \neq y$. But in this case case $W_0 = W$ which contradicts Lemma \ref{lemma:nontrivial}.
\end{proof}

\section{Kolmogorov complexity: Proofs}
\label{appendix:complexity}

Let $\lambda$ be the RLCT of the triple $(p, q, \varphi)$ associated to the synthesis problem (Definition \ref{defn:rlct_syn_prob}).
 
\begin{theorem} $\lambda \le \tfrac{1}{2} (M+N) \mathfrak{c}(q)$.
\end{theorem}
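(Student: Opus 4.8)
The plan is to exhibit, through a minimal-size classical solution, a large-dimensional submanifold contained in $W_0$, and then invoke the standard upper bound on the RLCT in terms of the codimension of a submanifold of zeros, exactly as in Example~\ref{example:detectA}: if $V \subseteq W_0$ is a submanifold of dimension $d$ then $\lambda \le \tfrac12(\dim W - d)$ \citep[Remark 7.3]{watanabe_algebraic}. Recall from Appendix~\ref{appendix:gen_sol_syn} that $W = \prod_{\sigma, q} \Delta\Sigma \times \Delta Q \times \Delta\{L,S,R\}$, so with $N = |\Sigma|$ and $M = |Q|$ each of the $NM$ tuples contributes $(N-1)+(M-1)+2 = N+M$ to the dimension and $\dim W = NM(N+M)$.

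If $\mathfrak{c}(q) = \infty$ there is nothing to prove, so assume it is finite and fix a Turing machine $T$ attaining the infimum --- legitimate since the infimum ranges over the finite set of machines with at most $N$ symbols and $M$ states --- say $T$ uses $N' \le N$ symbols and $M' \le M$ states, is a classical solution with code $u \in W^{code} \cap W_0$, and $M'N' = \mathfrak{c}(q)$. Call a pair $(\sigma, q) \in \Sigma \times Q$ \emph{active} if for some $x$ with $q(x) > 0$ the simulated run of $T$ on $x$ ever visits state $q$ scanning symbol $\sigma$ within $t$ steps. Since $\Sigma_{input}$ and the blank symbol $\square$ lie among the symbols used by $T$, every such run starts over the used alphabet and, by definition of the number of symbols and states used by $T$, stays there; hence each active pair has $\sigma$ a used symbol and $q$ a used state, so there are at most $M'N'$ active pairs.

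Next I would establish the ``don't care'' property: if $w$ agrees with $u$ on the coordinates of all active tuples, then for every $x$ with $q(x) > 0$ the computation of $\Delta\operatorname{step}^t(x,w)$ only ever consults description-tape tuples at active pairs --- an induction on the step count, since the simulated configuration evolves exactly as under $T$ --- so $\Delta\operatorname{step}^t(x,w) = \Delta\operatorname{step}^t(x,u)$ is the point mass at $f(x)$ and $K(w) = 0$. Thus the set $V$ of all $w \in W$ agreeing with $u$ on the active tuples is contained in $W_0$; it is the product of the simplices attached to the inactive tuples, hence (near its relative interior) a submanifold of dimension $(NM - \#\{\text{active pairs}\})(N+M) \ge (NM - M'N')(N+M)$. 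Applying the bound,
\[
\lambda \;\le\; \tfrac12\big(\dim W - \dim V\big) \;\le\; \tfrac12\, M'N'\,(N+M) \;=\; \tfrac12(M+N)\,\mathfrak{c}(q)\,.
\]

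The main work, and the place where care is needed, is the ``don't care'' step: one must check against the construction of the smooth relaxation in Appendix~\ref{appendix:smoothutms} that the relaxation samples from a tuple's distributions only when that tuple is actually applied (not merely scanned over), and run the induction showing the simulated head only ever lands on used symbols in used states. A secondary subtlety is the clean application of the codimension bound at the chosen solution: by Lemma~\ref{lemma:semi-algebraic} all solutions lie on the boundary $\partial W$, so one is invoking \citep[Remark 7.3]{watanabe_algebraic} at a boundary point (as is already done in Example~\ref{example:detectA}) and should confirm the requisite conditions there; choosing the center of $V$ with all inactive coordinates in the interior of their simplices isolates any boundary behaviour to the transverse, active directions. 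Finally, I expect this bound to be far from tight (cf. the Remark following the theorem): it discards the freedom coming from positive-entropy and further ``don't care'' directions among the active tuples, which is what yields the sharper estimate in Example~\ref{example:detectA}.
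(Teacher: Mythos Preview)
Your proposal is correct and takes essentially the same approach as the paper: freeze the description-tape tuples that the minimal machine uses, let the remaining tuples vary freely to produce a submanifold $V \subseteq W_0$, and apply the codimension bound $\lambda \le \tfrac12 \operatorname{codim}(V)$ from \citep{watanabe_algebraic}. The only difference is cosmetic---the paper freezes all tuples indexed by $\Sigma' \times Q'$ (used symbols $\times$ used states) rather than your slightly smaller set of ``active pairs'', but since you then bound the number of active pairs by $M'N'$ anyway the two routes coincide, and your explicit attention to the ``don't-care'' induction and the boundary application of the codimension bound is more careful than the paper's one-sentence assertion of independence.
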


\begin{proof}
Let $u \in W^{code} \cap W_0$ be the code of a Turing machine realising the infimum in the definition of the Kolmogorov complexity and suppose that this machine only uses symbols in $\Sigma'$ and states in $Q'$ with $N' = |\Sigma'|$ and $M' = |Q'|$. The time evolution of the staged pseudo-UTM $\mathcal U$ simulating $u$ on $x \in \Sigma_{input}^*$ is independent of the entries on the description tape that belong to tuples of the form $(\sigma, q, ?, ?, ?)$ with $(\sigma, q) \notin \Sigma' \times Q'$. Let $V \subseteq W$ be the submanifold of points which agree with $u$ on all tuples with $(\sigma, q) \in \Sigma' \times Q'$ and are otherwise free. Then $u \in V \subseteq W_0$ and $\operatorname{codim}(V) = M'N'(M+N)$ and by \citep[Theorem 7.3]{watanabe_algebraic} we have $\lambda \le \tfrac{1}{2} \operatorname{codim}(V)$.
\end{proof}

\section{Hyperparameters}
\label{appendix:hyperparameters}

The hyperparameters for the various synthesis tasks are contained in Table~\ref{table:hyper}. The \emph{number of samples} is $R$ in Algorithm \ref{alg:thm4} and the \emph{number of datasets} is $|\mathcal T|$. Samples are taken according to the Dirichlet distribution, a probability distribution over the simplex, which is controlled by the \emph{concentration}. When the concentration is a constant across all dimensions, as is assumed here, this corresponds to a density which is symmetric about the uniform probability mass function occurring in the centre of the simplex. The value $\alpha=1.0$ corresponds to the uniform distribution over the simplex.  Finally, the \emph{chain temperature} controls the default $\beta$ value, ie. all inverse temperature values are centered around $1/T$ where $T$ is the chain temperature.

\begin{table}[h]
\scriptsize
    \begin{center}
    \begin{tabular}
    {l l l}
    \toprule
      \textbf{Hyperparameter}  & \textbf{\detectA}  & \textbf{\parityCheck} \\ 
    \midrule
    Dataset size ($n$) & 200 & 100 \\
    Minimum sequence length ($a$) & 4 & 1 \\
    Maximum sequence length ($b$) & 7/8/9/10 & 5/6/7\\
    Number of samples ($R$)   &  20,000 &  2,000 \\
    Number of burn-in steps  &  1,000 & 500 \\
    Number of datasets ($|\mathcal T|$) & 4 & 3\\
    Target accept probability & 0.8 & 0.8\\
    Concentration ($\alpha$) & 1.0 & 1.0  \\
    Chain temperature ($T$) & $\log(500)$/$\log(1000)$ & $\log(300)$ \\ 
    Number of timesteps ($t$) & 10 & 42\\
   \bottomrule
   \end{tabular}
    \end{center}
    \caption{\normalsize Hyperparameters for Datasets and MCMC.}
    \label{table:hyper}
\end{table}

\section{The Shift Machine}
\label{appendix:shiftmachine}

The pseudo-UTM $\mathcal U$ is a complicated Turing machine, and the models $p(y|x,w)$ of Section~\ref{section:tms} are therefore not easy to analyse by hand. To illustrate the kind of geometry that appears, we study the simple Turing machine $\shiftMachine$ of \cite{clift_derivatives} and formulate an associated statistical learning problem. The tape alphabet is $\Sigma=\{\square,A,B,0,1,2\}$ and the input to the machine will be a string of the form $\square na_1a_2a_3\square$ where $n$ is called the \emph{counter} and $a_i\in\{A,B\}$.  The transition function, given in \emph{loc.cit.}, will move the string of $A$'s and $B$'s leftwards by $n$ steps and fill the right hand end of the string with $A$'s, keeping the string length invariant.  For example, if $\square 2BAB\square$ is the input to $M$, the output will be $\square 0BAA\square$. 

Set $W=\Delta\{0,2\}\times\Delta\{A,B\}$ and view $w = (h,k) \in W$ as representing a probability distribution $(1-h) \cdot 0 + h \cdot 2$ for the counter and $(1-k) \cdot B + k \cdot A$ for $a_1$. The model is
\begin{align*}
& p\big(y|x=(a_2,a_3),w\big) = (1-h)^2k\cdot A + \\
                              & (1-h)^2(1-k)\cdot B +\sum_{i=2}^3\binom{2}{i-1}h^{i-1}(1-h)^{3-i}\cdot a_i.
\end{align*}
This model is derived by propagating uncertainty through $\shiftMachine$ in the same way that $p(y|x,w)$ is derived from $\Delta \operatorname{step}^t$ in Section \ref{section:tms} by propagating uncertainty through $\mathcal U$. We assume that some distribution $q(x)$ over $\{A, B \}^2$ is given.

\begin{example}\label{example:shift_machine_deterministic} Suppose $q(y|x) = p(y|x,w_0)$ where $w_0=(1,1)$. It is easy to see that
\begin{align*}
K(w) &=-\frac{1}{4}\sum_{a_2,a_3}\log p\big(y=a_3|x=(a_2,a_3),w\big) \\
&=-\frac{1}{2}\log[g(h,k)]
\end{align*}
where $g(h,k)=\big( (1-h)^2k+h^2\big)\big( (1-h)^2(1-k)+h^2\big)$ is a polynomial in $w$.  Hence
\[
W_0 = \{ (h,k)\in W : g(h,k)=1\} = \mathbb{V}(g-1)\cap[0,1]^2
\]
is a semi-algebraic variety, that is, it is defined by polynomial equations and inequalities. Here $\mathbb{V}(h)$ denotes the vanishing locus of a function $h$.
\end{example}

\begin{example}\label{example:special_shift_machine} Suppose $q(AB) = 1$ and $q(y | x = AB) = \tfrac{1}{2} A + \tfrac{1}{2} B$. Then the Kullback-Leibler divergence is $K(h,k) = - \tfrac{1}{2} \log( 4f (1-f) )$ where $f = (1-h)^2 k + 2h(1-h)$. Hence $\nabla K = (f - \tfrac{1}{2})\frac{1}{f(1-f)} \nabla f$. Note that $f$ has no critical points, and so $\nabla K = 0$ at $(h,k) \in (0,1)^2$ if and only if $f(h,k) = \tfrac{1}{2}$. Since $K$ is non-negative, any $w \in W_0$ satisfies $\nabla K(w) = 0$ and so
\[
W_0 = [0,1]^2 \cap \mathbb{V}( 4f(1-f) - 1 ) = [0,1]^2 \cap \mathbb{V}( f - \tfrac{1}{2} )
\]
is semi-algebraic. Note that the curve $f = \tfrac{1}{2}$ is regular while the curve $4f(1-f) = 1$ is singular and it is the geometry of the singular curve that is related to the behaviour of $K$. This curve is shown in Figure \ref{figure:shiftK}. It is straightforward to check that the determinant of the Hessian of $K$ is identically zero on $W_0$, so that every point on $W_0$ is a degenerate critical point of $K$.
\end{example}

\begin{figure}[h]
\begin{center}
\includegraphics[scale=0.15]{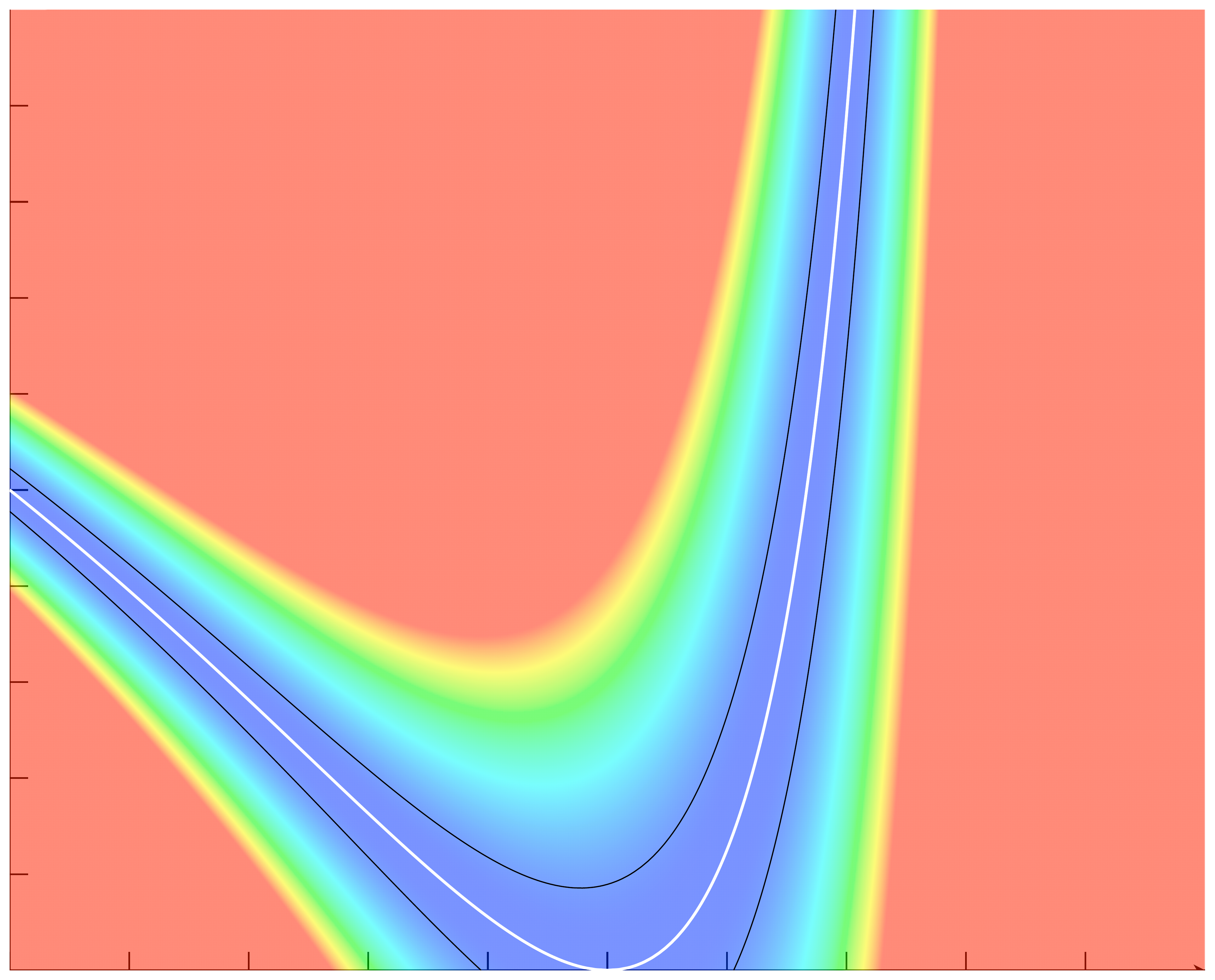}
\end{center}
\caption{Values of $K(h,k)$ on $[0,1]^2$ are shown by colour, ranging from blue (zero) to red ($0.01$). The singular analytic space $K = 0$ (white) and the regular analytic level set $K = 0.001$ (black).}
\label{figure:shiftK}
\end{figure}

\section{Parity Check}
\label{appendix:paritycheck}

The deterministic synthesis problem \parityCheck\, has
\begin{align*}
\Sigma &=\{\square, A, B, X\}, \\
Q=\{&\reject,\accept,\operatorname{getNextAB},\\
   & \operatorname{getNextA},\operatorname{getNextB},\operatorname{gotoStart}\}.  
\end{align*}
The distribution $q(x)$ is as discussed in Section \ref{section:experiments} and $q(y|x)$ is determined by the function taking in a string of $A$'s and $B$'s, and terminating in state $\accept$ if the string contains the same number of $A$'s as $B$'s, and terminating in state $\reject$ otherwise. The string is assumed to contain no blank symbols. The true distribution is realisable because there is a Turing machine using $\Sigma$ and $Q$ which computes this function: the machine works by repeatedly overwriting pairs consisting of a single $A$ and $B$ with $X$'s; if there are any $A$'s without a matching $B$ left over (or vice versa), we reject, otherwise we accept.

In more detail, the starting state $\operatorname{getNextAB}$ moves right on the tape until the first $A$ or $B$ is found, and overwrites it with an $X$.  If it's an $A$ (resp. $B$) we enter state $\operatorname{getNextB}$ (resp. $\operatorname{getNextA}$).  If no $A$ or $B$ is found, we enter the state $\accept$.  The state $\operatorname{getNextA}$ (resp. $\operatorname{getNextB}$) moves right until an $A$ (resp. $B$) is found, overwrites it with an $X$ and enters state $\operatorname{gotoStart}$ which moves left until a blank symbol is found (resetting the machine to the left end of the tape).  If no $A$'s (resp. $B$'s) were left on the tape, we enter state $\reject$. The dimension of the parameter space is $\dim W=240$.  If this model were regular, the RLCT would be $\dim W/2=120$.  Our RLCT estimates are contained in Table~\ref{table:rlct_paritycheck} and Figure~\ref{figure:RLCTplot_paritycheck}.

\begin{table}[h]
 \scriptsize
    \begin{center}
    \begin{tabular}
    {c c c c c}
    \toprule
      \textbf{Max-length}  & \textbf{Temperature}  & \textbf{RLCT} & \textbf{Std} & \textbf{R squared}\\ 
    \midrule
    5 & $\log(300)$ & 4.411732 & 0.252458 & 0.969500\\
    6 & $\log(300)$ & 4.005667 & 0.365855 & 0.971619\\
    7 & $\log(300)$ & 3.887679 & 0.276337 & 0.973716\\
   \bottomrule
   \end{tabular}
    \end{center}
    \caption{\normalsize RLCT estimates for \parityCheck.}
    \label{table:rlct_paritycheck}
\end{table}

\begin{figure}[h]
\begin{center}
\includegraphics[scale=0.55]{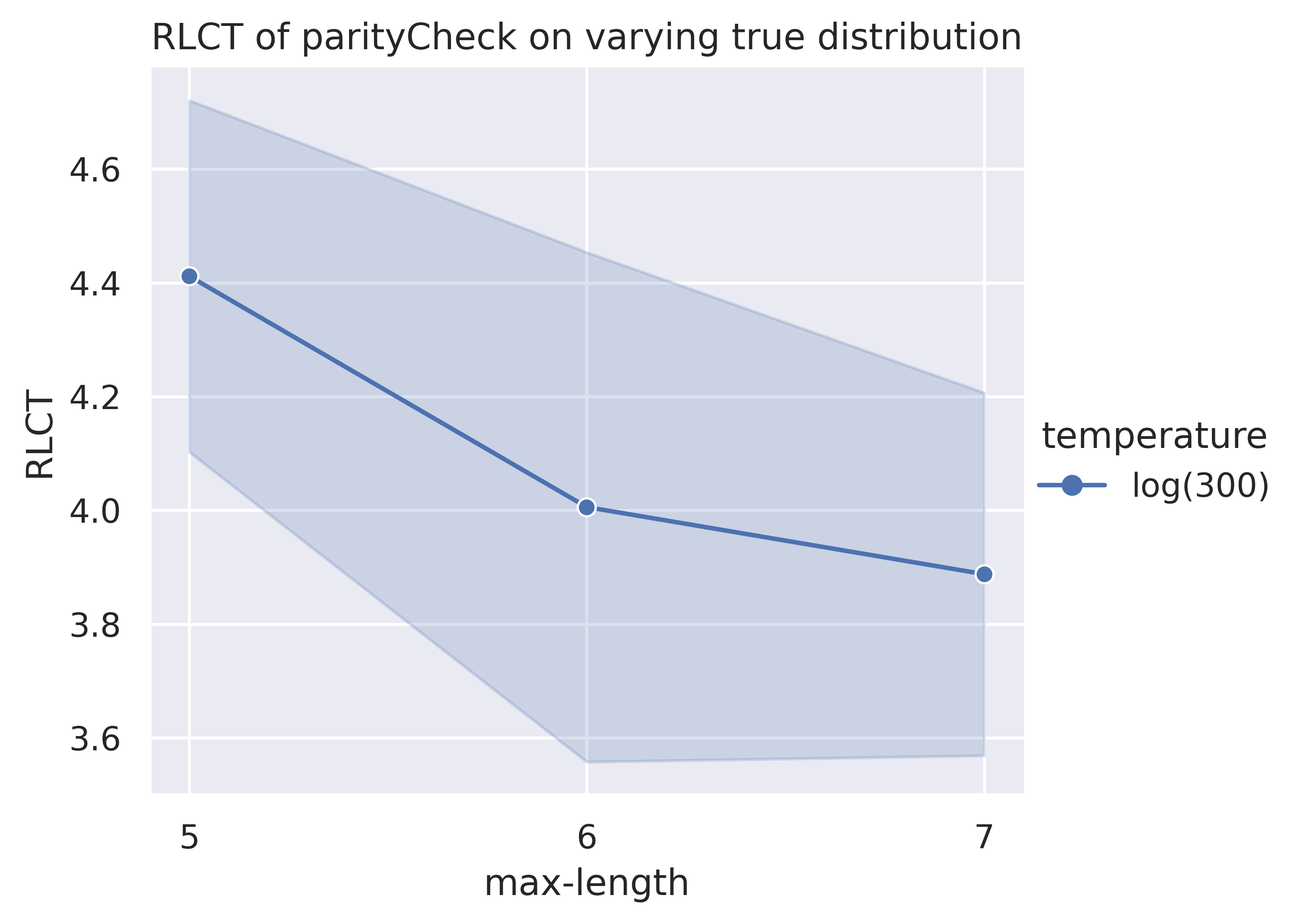}
\end{center}
\caption{Plot of RLCT estimates for \parityCheck. Shaded region shows one standard deviation.}
\label{figure:RLCTplot_paritycheck}
\end{figure}

\section{Staged Pseudo-UTM}
\label{appendix:stagedpseudoutm}

Simulating a Turing machine $M$ with tape alphabet $\Sigma$ and set of states $Q$ on a standard UTM requires the specification of an encoding of $\Sigma$ and $Q$ in the tape alphabet of the UTM. From the point of view of exploring the geometry of program synthesis, this additional complexity is uninteresting and so here we consider a \emph{staged pseudo-UTM} whose alphabet is
\[
\Sigma_{\text{UTM}} = \Sigma \cup Q \cup \{L,R,S\} \cup \{X, \square\}
\]
where the union is disjoint where $\square$ is the blank symbol (which is distinct from the blank symbol of $M$). Such a machine is capable of simulating any machine with tape alphabet $\Sigma$ and set of states $Q$ but cannot simulate arbitrary machines and is not a UTM in the standard sense. The adjective \emph{staged} refers to the design of the UTM, which we now explain. The set of states is
\begin{align*}
Q_{\text{UTM}} = \{&
   \text{compSymbol, compState, copySymbol,} \\
  & \text{copyState, copyDir, $\neg$compState, $\neg$copySymbol,}\\
  & \text{$\neg$copyState, $\neg$copyDir, updateSymbol,}\\
  & \text{updateState, updateDir, resetDescr} \}.
\end{align*}
The UTM has four tapes numbered from 0 to 3, which we refer to as the \textit{description tape}, the \textit{staging tape}, the \textit{state tape} and the \textit{working tape} respectively. Initially the description tape contains a string of the form
\[
Xs_0q_0s_0'q_0'd_0s_1q_1s_1'q_1'd_1 \dots s_Nq_Ns_N'q_N'd_NX,
\]
corresponding to the tuples which define $M$, with the tape head initially on $s_0$. The staging tape is initially a string $XXX$ with the tape head over the second $X$. The state tape has a single square containing some distribution in $\Delta Q$, corresponding to the initial state of the simulated machine $M$, with the tape head over that square. Each square on the the working tape is some distribution in $\Delta \Sigma$ with only finitely many distributions different from $\square$. The UTM is initialised in state compSymbol.

\begin{figure}[h]
\begin{center}
\includegraphics[width=0.52\textwidth,height=0.42\textheight]{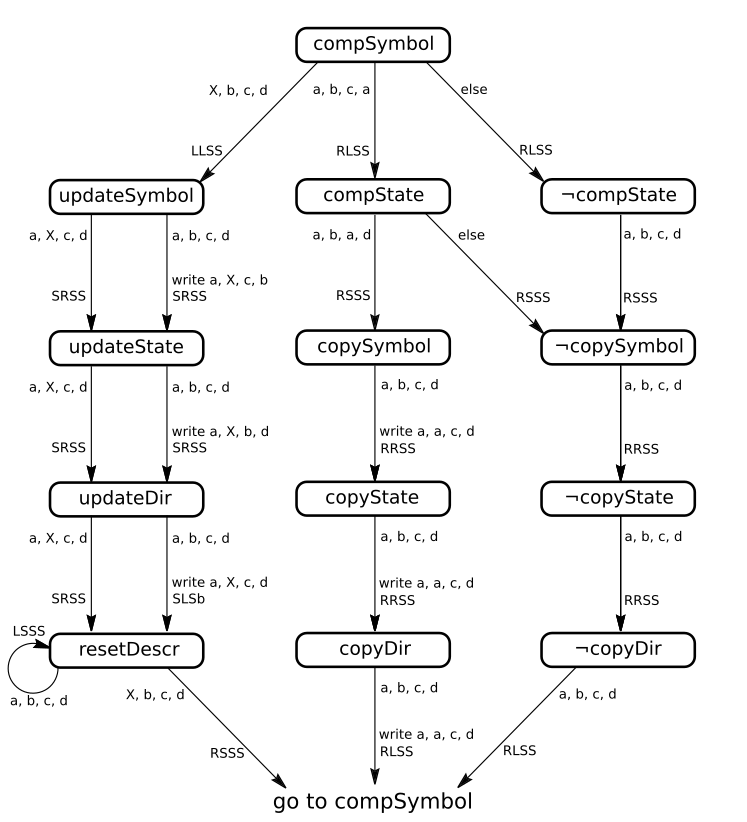}
\end{center}
\caption{The UTM. Each of the rectangles are states, and an arrow $q \to q'$ has the following interpretation: if the UTM is in state $q$ and sees the tape symbols (on the four tapes) as indicated by the source of the arrow, then the UTM transitions to state $q'$, writes the indicated symbols (or if there is no write instruction, simply rewrites the same symbols back onto the tapes), and performs the indicated movements of each of the tape heads. The symbols $a,b,c,d$ stand for generic symbols which are not $X$.}
\label{figure:stagedutm}
\end{figure}

The operation of the UTM is outlined in Figure~\ref{figure:stagedutm}. It consists of two phases; the \textit{scan phase} (middle and right path), and the \textit{update phase} (left path). During the scan phase, the description tape is scanned from left to right, and the first two squares of each tuple are compared to the contents of the working tape and state tape respectively. If both agree, then the last three symbols of the tuple are written to the staging tape (middle path), otherwise the tuple is ignored (right path). Once the $X$ at the end of the description tape is reached, the UTM begins the update phase, wherein the three symbols on the staging tape are then used to print the new symbol on the working tape, to update the simulated state on the state tape, and to move the working tape head in the appropriate direction. The tape head on the description tape is then reset to the initial $X$.

\begin{remark}
One could imagine a variant of the UTM which did not include a staging tape, instead performing the actions on the work and state tape directly upon reading the appropriate tuple on the description tape. However, this is problematic when the contents of the state or working tape are distributions, as the exact time-step of the simulated machine can become unsynchronised, increasing entropy. As a simple example, suppose that the contents of the state tape were $0.5q + 0.5p$, and the symbol under the working tape head was $s$. Upon encountering the tuple $sqs'q'R$, the machine would enter a superposition of states corresponding to the tape head having both moved right and not moved, complicating the future behaviour.
\end{remark}

We define the \textit{period} of the UTM to be the smallest nonzero time interval taken for the tape head on the description tape to return to the initial $X$, and the machine to reenter the state compSymbol. If the number of tuples on the description tape is $N$, then the period of the UTM is $T = 10N+5$. Moreover, other than the working tape, the position of the tape heads are $T$-periodic.

\section{Smooth Turing Machines}
\label{appendix:smoothutms}

Let $\mathcal U$ be the staged pseudo-UTM of Appendix \ref{appendix:stagedpseudoutm}. In defining the model $p(y|x,w)$ associated to a synthesis problem in Section \ref{section:tms} we use a smooth relaxation $\Delta \operatorname{step}^t$ of the step function of $\mathcal U$. In this appendix we define the smooth relaxation of any Turing machine following \cite{clift_derivatives}.

Let $M=(\Sigma,Q,\delta)$ be a Turing machine with a finite set of symbols $\Sigma$, a finite set of states $Q$ and transition function $\delta:\Sigma\times Q\rightarrow\Sigma\times Q\times\{-1,0,1\}$.  We write $\delta_i=\textup{proj}_i\circ\delta$ for the $i$th component of $\delta$ for $i\in\{1, 2, 3\}$.  For $\square\in\Sigma$, let
\[
\Sigma^{\mathbb{Z},\square} = \{ f:\mathbb{Z}\rightarrow\Sigma | f(i) = \square\,\,\mbox{except for finitely many}\,\, i \}.
\]
We can associate to $M$ a discrete dynamical system $\widehat{M}=(\Sigma^{\mathbb{Z},\square}\times Q,\textup{step})$ where
\[ 
\textup{step}:\Sigma^{\mathbb{Z},\square}\times Q\rightarrow \Sigma^{\mathbb{Z},\square}\times Q
\]
is the \emph{step function} defined by
\begin{align}
\textup{step}(\sigma,q) = \Big(\alpha^{\delta_3(\sigma_0,q)}\big( &\ldots, \sigma_{-2},\sigma_{-1},\delta_1(\sigma_0,q), \nonumber\\ 
& \sigma_1,\sigma_2,\ldots\big), \delta_2(\sigma_0,q)\Big).
\end{align}
with shift map $\alpha^{\delta_3(\sigma_0,q)}(\sigma)_u=\sigma_{u+\delta_3(\sigma_0,q)}$.  

Let $X$ be a finite set.  The \emph{standard $X$-simplex} is defined as
\begin{align}\label{eq:simplex}
\Delta X = \{ \sum_{x\in X}\lambda_x x\in\mathbb{R}X | &\sum_x\lambda_x = 1\,\mbox{and}\,  \nonumber\\
& \lambda_x\geq 0 \,\mbox{for all}\, x\in X\}
\end{align}
where $\mathbb{R}X$ is the free vector space on $X$.  We often identify $X$ with the vertices of $\Delta X$ under the canonical inclusion
$   
i:X\rightarrow\Delta X 
$
given by $i(x)=\sum_{x'\in X}\delta_{x=x'}x'$.  For example $\{0,1\}\subset\Delta(\{0,1\})\simeq [0,1]$.

A tape square is said to be at \emph{relative position} $u\in\mathbb{Z}$ if it is labelled $u$ after enumerating all squares in increasing order from left to right such that the square currently under the head is assigned zero.  Consider the following random variables at times $t\geq 0$:
\begin{itemize}
\item $Y_{u,t}\in\Sigma$: the content of the tape square at relative position $u$ at time $t$.
\item $S_t\in Q$: the internal state at time $t$.
\item $Wr_t\in\Sigma$: the symbol to be written, in the transition from time $t$ to $t+1$.
\item $Mv_t\in\{L,S,R\}$: the direction to move, in the transition from time $t$ to $t+1$.
\end{itemize}

We call a smooth dynamical system a pair $(A,\phi)$ consisting of a smooth manifold $A$ with corners together with a smooth transformation $\phi:A\rightarrow A$.

\begin{definition}
Let $M=(\Sigma,Q,\delta)$ be a Turing machine.  The \textit{smooth relaxation} of $M$ is the smooth dynamical system $((\Delta\Sigma)^{\mathbb{Z},\square}\times\Delta Q,\Delta\textup{step})$ where
\[
\Delta\textup{step}:(\Delta\Sigma)^{\mathbb{Z},\square}\times\Delta Q\rightarrow(\Delta\Sigma)^{\mathbb{Z},\square}\times\Delta Q
\]
is a smooth transformation sending a state $(\{ P(Y_{u,t})\}_{u\in\mathbb{Z}}, P(S_t))$ to $(\{ P(Y_{u,t+1})\}_{u\in\mathbb{Z}}, P(S_{t+1}))$ determined by the equations
\begin{itemize}[leftmargin=2.8em]
\item $P(Mv_t=d|C) = \\
         {}\quad \sum_{\sigma,q}\delta_{\delta_3(\sigma,q)=d}P(Y_{0,t}=\sigma|C)P(S_t=q|C),$
\item  $P(Wr_t=\sigma|C) = \\
          {}\quad \sum_{\sigma',q}\delta_{\delta_1(\sigma',q)=\sigma}P(Y_{0,t}=\sigma'|C)P(S_t=q|C)$,
\item  $P(S_{t+1}=q|C) = \\
	{}\quad\sum_{\sigma,q'}\delta_{\delta_2(\sigma,q')=q}P(Y_{0,t}=\sigma|C)P(S_t=q'|C)$,
\item $P(Y_{u,t+1}=\sigma|C) = \\
        {}\quad P(Mv_t=L|C)\Big(\delta_{u\neq 1}P(Y_{u-1,t}=\sigma|C) + \\
         {}\hspace{3.5cm} \delta_{u=1}P(Wr_t=\sigma|C)\Big) +\\
        {} \quad P(Mv_t=S|C)\Big(\delta_{u\neq 0}P(Y_{u,t}=\sigma|C) + \\
         {}\hspace{3.5cm} \delta_{u=0}P(Wr_t=\sigma|C)\Big) + \\
        {}\quad P(Mv_t=R|C)\Big(\delta_{u\neq -1}P(Y_{u+1,t}=\sigma|C) + \\
         {}\hspace{3.5cm} \delta_{u=-1}P(Wr_t=\sigma|C)\Big)$,
\end{itemize}
where $C\in (\Delta\Sigma)^{\mathbb{Z},\square}\times\Delta Q$ is an initial state.
\end{definition}

We will call the smooth relaxation of a Turing machine a \emph{smooth Turing machine}.  A smooth Turing machine encodes uncertainty in the initial configuration of a Turing machine together with an update rule for how to propagate this uncertainty over time.  We interpret the smooth step function as updating the state of belief of a ``naive" Bayesian observer.  This nomenclature comes from the assumption of conditional independence between random variables in our probability functions.

\begin{remark}
Propagating uncertainty using standard probability leads to a smooth dynamical system which encodes the state evolution of an ``ordinary" Bayesian observer of the Turing machine.  This requires the calculation of various joint distributions which makes such an extension computationally difficult to work with.  Computation aside, the naive probabilistic extension is justified from the point of view of derivatives of algorithms according to the denotational semantics of differential linear logic.  See \cite{clift_derivatives} for further details.
\end{remark}

We call the smooth extension of a universal Turing machine a \emph{smooth universal Turing machine}. Recall that the staged pseudo-UTM $\mathcal U$ has four tapes: the description tape, the staging tape, the state tape and working tape. The smooth relaxation of $\mathcal U$ is a smooth dynamical system 
\begin{align*}
\Delta\textup{step}_{\mathcal U}:[(\Delta\Sigma_{\text{UTM}} &)^{\mathbb{Z},\square}]^4 \times\Delta Q_{\text{UTM}}\rightarrow\\
& [(\Delta\Sigma_{\text{UTM}})^{\mathbb{Z},\square}]^4\times\Delta Q_{\text{UTM}}\,.
\end{align*}
If we use the staged pseudo-UTM to simulate a Turing machine with tape alphabet $\Sigma \subseteq \Sigma_{\operatorname{UTM}}$ and states $Q \subseteq \Sigma_{\operatorname{UTM}}$ then with some determined initial state the function $\Delta\textup{step}$ restricts to
\begin{align*}
\Delta\textup{step}_{\mathcal U}: (\Delta \Sigma)^{\mathbb{Z}, \square} &\times W \times \Delta Q \times \mathcal{X} \rightarrow\\ 
 &(\Delta \Sigma)^{\mathbb{Z}, \square} \times W \times \Delta Q \times \mathcal{X}
\end{align*}
where the first factor is the configuration of the work tape, $W$ is as in (\ref{eq:defnW}) and
\[
\mathcal X = [(\Delta\Sigma_{\text{UTM}})^{\mathbb{Z},\square}] \times \Delta Q_{\text{UTM}}
\]
where the first factor is the configuration of the staging tape. Since $\mathcal U$ is periodic of period $T = 10N + 5$ (Appendix \ref{appendix:stagedpseudoutm}) the iterated function $(\Delta \operatorname{step}_{\mathcal U})^T$ takes an input with staging tape in its default state $XXX$ and UTM state $\text{compSymbol}$ and returns a configuration with the same staging tape and state, but with the configuration of the work tape, description tape and state tape updated by one complete simulation step. That is,
\begin{align*}
(\Delta \operatorname{step}_{\mathcal U})^T\big( x, w, q, & XXX, \text{compSymbol} ) = \\
& ( F(x,w,q), XXX, \text{compSymbol} \big)
\end{align*}
for some smooth function
\begin{equation}\label{eq:F}
F: (\Delta \Sigma)^{\mathbb{Z}, \square} \times W \times \Delta Q \rightarrow (\Delta \Sigma)^{\mathbb{Z}, \square} \times W \times \Delta Q\,.
\end{equation}
Finally we can define the function $\Delta \operatorname{step}^t$ of (\ref{eq:propdeltastep}). We assume all Turing machines are initialised in some common state $\text{init} \in Q$.

\begin{definition} Given $t \ge 0$ we define 
\[ 
\Delta \operatorname{step}^t: \Sigma^* \times W \rightarrow \Delta Q
\] 
by
\[
\Delta \operatorname{step}^t( x, w ) = \Pi_Q F^t( x, w, \text{init} )
\]
where $\Pi_Q$ is the projection onto $\Delta Q$.
\end{definition}

\section{Direct Simulation}
\label{appendix:directsimulation}

For computational efficiency in our PyTorch implementation of the staged pseudo-UTM we implement $F$ of (\ref{eq:F}) rather than $\Delta \operatorname{step}_{\mathcal U}$. We refer to this as \emph{direction simulation} since it means that we update in one step the state and working tape of the UTM for a full cycle where a cycle consists of $T=10N+5$ steps of the UTM.

Let $S(t)$ and $Y_u(t)$ be random variables describing the contents of state tape and working tape in relative positions $0,u$ respectively after $t\geq 0$ time steps of the UTM.  We define $\widetilde{S}(t):=S(4+Tt)$ and $\widetilde{Y}_u(t):=Y_u(4 +Tt)$ where $t\geq 0$ and $u\in\mathbb{Z}$.  The task then is to define functions $f, g$ such that
\[  
\widetilde{S}(t+1)=f(\widetilde{S}(t))
\]
\[  
\widetilde{Y}_u(t+1)=g(\widetilde{Y}_u(t)).
\]
The functional relationship is given as follows: for $1\leq i\leq N$ indexing tuples on the description tape, while processing that tuple, the UTM is in a state distribution
$
\lambda_i\cdot\bar{q} + (1-\lambda_i)\cdot\neg\bar{q}
$
where $\bar{q}\in\{\text{copySymbol, copyState, copyDir}\}$. Given the initial state of the description tape, we assume uncertainty about $s',q',d$ only.  This determines a map
\[
\theta:\{1,\ldots, N\}\rightarrow\Sigma\times Q
\]
where the description tape at tuple number $i$ is given by $\theta(i)_1\theta(i)_2P(s'_i)P(q'_i)P(d_i)$.  We define the conditionally independent joint distribution between $\{\widetilde{Y}_{0,t-1},\widetilde{S}_{t-1}\}$ by
\begin{align*}
\lambda_i &=\sum_{\sigma\in\Sigma}\delta_{\theta(i)_1=\sigma}P(\widetilde{Y}_{0,t-1}=\sigma)\cdot\sum_{q\in Q}\delta_{\theta(i)_2=q}P(\widetilde{S}_{t-1}=q) \\
                 &=P(\widetilde{Y}_{0,t-1}=\theta(i)_1)\cdot P(\widetilde{S}_{t-1}=\theta(i)_2).
\end{align*}

We then calculate a recursive set of equations for $0\leq j\leq N$ describing distributions $P(\hat{s}_j), P(\hat{q}_j)$ and $P(\hat{d}_j)$ on the staging tape after processing all tuples up to and including tuple $j$.  These are given by $P(\hat{s}_0)=P(\hat{q}_0)=P(\hat{d}_0)=1\cdot X$ and
\begin{align*}
P(\hat{s}_i) &= \sum_{\sigma\in\Sigma}\{ \lambda_i\cdot P(s'_i=\sigma)+(1-\lambda_i)\cdot \\
& P(\hat{s}_{i-1}=\sigma)\}\cdot \sigma +(1-\lambda_i)\cdot P(\hat{s}_{i-1}=X)\cdot X,
\end{align*}
\begin{align*}
P(\hat{q}_i) &= \sum_{q\in Q}\{ \lambda_i\cdot P(q'_i=q)+(1-\lambda_i)\cdot  \\
& P(\hat{q}_{i-1}=q)\}\cdot q + (1-\lambda_i)\cdot P(\hat{q}_{i-1}=X)\cdot X,
\end{align*}
\begin{align*}
P(\hat{d}_i) &= \sum_{a\in\{L,R,S\}}\{ \lambda_i\cdot P(d_i=a)+(1-\lambda_i)\cdot  \\
& P(\hat{d}_{i-1}=a)\}\cdot a + (1-\lambda_i)\cdot P(\hat{d}_{i-1}=X)\cdot X.
\end{align*}

Let $A_\sigma=P(\hat{s}_N=X)\cdot P(\widetilde{Y}_{0,t-1}=\sigma)+P(\hat{s}_N=\sigma)$.  In terms of the above distributions
\begin{align*}
P(\widetilde{S}_t) = \sum_{q\in Q}\Big( P(\hat{q}_N =X)\cdot P &(\widetilde{S}_{t-1}=q)  \\
 & + P(\hat{q}_N=q)\Big)\cdot q
\end{align*}
and
\begin{align*}
& P(\widetilde{Y}_{u,t}=\sigma) = \\
                &{}\quad\quad P(\hat{d}_N=L)\left(\delta_{u\neq 1}P(\widetilde{Y}_{u-1,t-1}=\sigma)+\delta_{u=1}A_{\sigma}\right) +\\
                &{}\quad\quad P(\hat{d}_N=R)\left(\delta_{u\neq -1}P(\widetilde{Y}_{u+1,t-1}=\sigma)+\delta_{u=-1}A_{\sigma}\right) +\\
                &{}\quad\quad P(\hat{d}_N=S)\left(\delta_{u\neq 0}P(\widetilde{Y}_{u,t-1}=\sigma) + \delta_{u=0}A_\sigma\right) +\\
                &{}\quad\quad P(\hat{d}_N=X)\left(\delta_{u\neq 0}P(\widetilde{Y}_{u,t-1}=\sigma) + \delta_{u=0}A_\sigma\right).
\end{align*}

Using these equations, we can state efficient update rules for the staging tape.  We have 
\[
P(\hat{s}_N=X) = \prod_{j=1}^N(1-\lambda_j),
\]
\[
P(\hat{s}_N=\sigma) = \sum_{j=1}^N\lambda_j\cdot P(s_j'=\sigma)\prod_{l=j+1}^N(1-\lambda_l),
\]
\[
P(\hat{q}_N=X) = \prod_{j=1}^N(1-\lambda_j),
\]
\[
P(\hat{q}_N=q) = \sum_{j=1}^N\lambda_j\cdot P(q_j'=q)\prod_{l=j+1}^N(1-\lambda_l),
\]
\[
P(\hat{d}_N=X) = \prod_{j=1}^N(1-\lambda_j),
\]
\[
P(\hat{d}_N=a) = \sum_{j=1}^N\lambda_j\cdot P(d_j=a)\prod_{l=j+1}^N(1-\lambda_l).
\]

To enable efficient computation, we can express these equations using tensor calculus.  Let $\lambda = (\lambda_1,\ldots,\lambda_N)\in\mathbb{R}^N$.  We view 
\[
\theta:\mathbb{R}^N\rightarrow\mathbb{R}\Sigma\otimes\mathbb{R}Q
\]
as a tensor and so 
\[
\theta=\sum_{i=1}^N i\otimes\theta(i)_1\otimes\theta(i)_2\in\mathbb{R}^N\otimes\mathbb{R}\Sigma\otimes\mathbb{R}Q.
\] 
Then
\begin{align*}
\theta\lrcorner & \left(P(\widetilde{Y}_{0,t-1})\otimes P(\widetilde{S}_{t-1})\right)=\\
                  &  \sum_{i=1}^N i\cdot P(\widetilde{Y}_{0,t-1}=\theta(i)_1)\cdot P(\widetilde{S}_{t-1}=\theta(i)_2) = \lambda.
\end{align*}
If we view $P(s'_*=\bullet)\in\mathbb{R}^N\otimes\mathbb{R}^\Sigma$ as a tensor, then
\begin{align*}
P(\hat{s}_N) &= \sum_{j=1}^N P(s'_j=\bullet)\cdot\left(\lambda_j\prod_{l=j+1}^N(1-\lambda_l)\right) \\
                    &= \lambda\cdot\left(\prod_{l=2}^N(1-\lambda_l),\prod_{l=3}^N(1-\lambda_l),\ldots,(1-\lambda_N),1\right)
\end{align*}
can be expressed in terms on the vector $\lambda$ only.  Similarly, $P(q'_*=\bullet)\in\mathbb{R}^N\otimes\mathbb{R}^Q$ with
\begin{align*}
P(\hat{q}_N) &= \sum_{j=1}^N P(q'_j=\bullet)\cdot\left(\lambda_j\prod_{l=j+1}^N(1-\lambda_l)\right) \\
                    &= \lambda\cdot\left(\prod_{l=2}^N(1-\lambda_l),\prod_{l=3}^N(1-\lambda_l),\ldots,(1-\lambda_N),1\right)
\end{align*}
and $P(d_*=\bullet)\in\mathbb{R}^N\otimes\mathbb{R}^3$ with
\begin{align*}
P(\hat{d}_N) &= \sum_{j=1}^N P(d_j=\bullet)\cdot\left(\lambda_j\prod_{l=j+1}^N(1-\lambda_l)\right) \\
               &= \lambda\cdot\left(\prod_{l=2}^N(1-\lambda_l),\prod_{l=3}^N(1-\lambda_l),\ldots,(1-\lambda_N),1\right).
\end{align*}

\end{document}